\pgfplotsset{compat=1.15}
\newcounter{BMatrix}
\newcommand{\setmaxwd}[1]{%
  \eqmakebox[BM-\theBMatrix][\BMalign]{$#1$}%
}
\newenvironment{custom_thm}[1]
  {\innercustomthm}
  {\endinnercustomthm}
\newenvironment{bold_title_par}[1]
  {\begin{trivlist}
\item {\bf #1: }}
{\end{trivlist}}
\newtheorem{theorem}{Theorem}
\newtheorem{assumption}{Assumption}
\newtheorem{proposition}{Proposition}[section]
\newtheorem{corollary_prop}{Corollary}[proposition]
\newtheorem{lemma}{Lemma}
\theoremstyle{definition}
\newtheorem{definition}{Definition}
\theoremstyle{remark}
\newtheorem*{remark}{Remark}
\newcommand{\inv}{^{-1}}
\newcommand{\1}{\mathds{1}}
\newcommand{\R}{\mathbb{R}}
\newcommand{\Z}{\mathbb{Z}}
\newcommand{\N}{\mathbb{N}}
\newcommand{\eqd}{\overset{d}{=}}
\newcommand{\var}{\mathrm{Var}}
\newcommand{\cov}{\mathrm{Cov}}
\newcommand{\del}{\partial}
\newcommand{\ra}{\rightarrow}
\newcommand{\la}{\leftarrow}
\newcommand{\cd}{\cdot}
\newcommand{\ds}{\dots}
\newcommand{\diag}{\mathrm{diag}}
\newcommand{\cC}{\mathcal{C}}
\newcommand{\cF}{\mathcal{F}}
\newcommand{\cG}{\mathcal{G}}
\newcommand{\cJ}{\mathcal{J}}
\newcommand{\cL}{\mathcal{L}}
\newcommand{\cM}{\mathcal{M}}
\newcommand{\cP}{\mathcal{P}}
\newcommand{\cT}{\mathcal{T}}
\newcommand{\TV}[1]{\left\|#1\right\|_\mathrm{TV}}
\newcommand{\spnorm}[1]{\left|#1\right|_\mathrm{span}}
\newcommand{\set}[1]{\left\{{#1}\right\}}
\newcommand{\floor}[1]{\left\lfloor{#1}\right\rfloor}
\newcommand{\ceil}[1]{\left\lceil{#1}\right\rceil}
\newcommand{\norm}[1]{\left\|#1\right\|}
\newcommand{\abs}[1]{\left|#1\right|}
\newcommand{\sqbk}[1]{\left[ #1 \right]}
\newcommand{\sqbkcond}[2]{\left[ #1 \middle| #2 \right]}
\newcommand{\crbk}[1]{\left( #1 \right)}
\newcommand{\geom}{\mathrm{Geom}}
\newcommand{\crbkcond}[2]{\left( #1 \middle| #2 \right)}
\newcommand{\bmx}[1]{\begin{bmatrix} #1 \end{bmatrix}}
\newcommand{\argmax}[1]{\underset{#1}{\operatorname{arg}\,\operatorname{max}}\;}
\definecolor{codegreen}{rgb}{0,0.4,0}
\definecolor{codeblue}{rgb}{0.1,0.1,0.7}
\definecolor{codegray}{rgb}{0.5,0.5,0.5}
\definecolor{codepurple}{rgb}{0.58,0,0.82}
\definecolor{backcolour}{rgb}{0.97,0.97,0.97}
\lstdefinestyle{mystyle}{
    backgroundcolor=\color{backcolour},   
    commentstyle=\color{codegreen},
    keywordstyle=\color{magenta},
    numberstyle=\tiny\color{codegray},
    stringstyle=\color{codepurple},
    basicstyle=\scriptsize\ttfamily,
    identifierstyle=\color{codeblue},
    breakatwhitespace=false,         
    breaklines=true,                 
    captionpos=b,                    
    keepspaces=true,                 
    numbers=left,                    
    numbersep=4pt,                  
    showspaces=false,                
    showstringspaces=false,
    showtabs=true,                  
    tabsize=3
}
\newcommand{\tminor}{t_\mathrm{minorize} }
\newcommand{\tmix}{{t_{\mathrm{mix}}}}
\newcommand{\dhel}{{d_{\mathrm{Hel}}}}
\newcommand{\tsep}{{t_{\mathrm{separation}}}}
\title{Optimal Sample Complexity of Reinforcement Learning for\\
Mixing Discounted Markov Decision Processes}
\author{Shengbo Wang}
\author{Jose Blanchet}
\author{Peter Glynn}
\affil{Department of Management Science and Engineering\\
Stanford University}
\date{}
\begin{document}
\maketitle

\begin{abstract}

We consider the optimal sample complexity theory of tabular reinforcement learning (RL) for maximizing the infinite horizon discounted reward in a Markov decision process (MDP). Optimal worst-case complexity results have been developed for tabular RL problems in this setting, leading to a sample complexity dependence on $\gamma$ and $\epsilon$ of the form $\tilde \Theta((1-\gamma)^{-3}\epsilon^{-2})$, where $\gamma$ denotes the discount factor and $\epsilon$ is the  solution error tolerance. However, in many applications of interest, the optimal policy (or all policies) induces mixing. We establish that in such settings, the optimal sample complexity dependence is $\tilde \Theta(\tmix(1-\gamma)^{-2}\epsilon^{-2})$, where $\tmix$ is the total variation mixing time. Our analysis is grounded in regeneration-type ideas, which we believe are of independent interest, as they can be used to study RL problems for general state space MDPs.
\end{abstract}

\section{Introduction}
\par Reinforcement learning (RL) \citep{sutton2018reinforcement} has witnessed substantial empirical successes in a wide range of applications, including robotics~\citep{Kober2013RLinRoboticsSurvey}, computer vision \citep{Sadeghi2016Cad2rl}, and finance \citep{Deng2017rlFinanceSignal}. This has sparked substantial research of RL theory and applications in operations research and management sciences. This paper provides a theoretical contribution to this area by considering a tabular RL environment in which a controller wishes to maximize an infinite horizon discounted reward governed by a Markov decision process (MDP). In many applications, due to engineering and managerial considerations, the underlying MDP is naturally constrained to an environment in which good policies must be stable, c.f. \cite{Bramson2008StabilityQ}. In these settings, the controlled Markov chain induced by a reasonable policy will converge in distribution to a unique steady state behavior, regardless of the initial condition. This phenomenon is known as \textit{mixing}. In cases where the mixing rate of a system is rapid, a finite-time observation of the state process can provide a more accurate statistical depiction of its long-term behavior. Consequently, it is reasonable to expect that for such systems, the development of a lower-complexity algorithm for policy learning is feasible, even in the presence of small discount rates, provided that an optimal policy displays fast mixing. This intuition serves as the motivation and driving force for this paper. 

\par In addition, an optimal algorithm and analysis for discounted mixing MDPs can serve as valuable algorithmic and theoretical tools for achieving optimal sample complexity results for long-run average reward MDPs through the reduction method proposed in \cite{jin_sidford2021}.  While the worst case sample complexities of discounted tabular RL algorithms have been studied extensively in recent years~\citep{azar2013,agarwal2020,li2022settling}, we are unaware of a complexity theory and algorithm that is optimized for discounted mixing MDPs. Therefore, it is of significant practical and theoretical value to develop a satisfactory complexity theory for discounted mixing MDPs.

\par This paper contributes to the existing literature in the following ways. First, we adapt the \textit{regeneration} idea and the \textit{split chain} technique \citep{athreya1978,Nummelin1978split} for analyzing stable Markov chain to the RL context. This framework naturally leads to measuring the mixing rate of the system by the \textit{minorization time} (Definition \ref{def:tmix_tminor}), which is the expected amount of time between successive Markov chain regenerations. The drift and minorization formulation have been widely used in modeling and analyzing stable queuing systems \citep{Meyn1994StabilityGJN, Bramson2008StabilityQ}. In Theorem \ref{thm:assump_equiv}, we prove that in the setting of uniformly ergodic Markov chains with finite state space, the minorization time is equivalent to the total variation mixing time that is used in the sample complexity of mixing MDP literature \citep{Wang2017PrimalDualL,jin_sidford2021}. Moreover, this formulation directly generalizes to infinite state spaces, foreshadowing a generalization our results to general state space MDPs. 
\par Secondly, we establish algorithms and worst case sample complexity upper bounds under three different mixing assumptions using Wainwright's \textit{variance-reduced Q-learning} \citep{wainwright2019} algorithmic framework. We begin with a general setting where there exists an optimal policy such that an upper bound on the minorization time is available. Next, we require that there is a uniform upper bound on the minorization times of the class of optimal policies. Finally, we consider the case where the minorization time of all policy has a uniform upper bound. Our worst case sample complexity guarantees in these three contexts are summarized in Table \ref{tab:sample_complexity}. Note that in all three cases the dependence on the effective horizon is $(1-\gamma)^{-2}$. 
\par Thirdly, we introduce the non-asymptotic minimax risk of the class of MDPs with a uniform minorization time upper bound (Definition \ref{def:minimax_risk}). We compare this definition of minimax risk to the instance dependent version in \cite{khamaru2021} and find ours easier to interpret. Moreover, we prove the same lower bounds in Table \ref{tab:sample_complexity} for these two definitions of the non-asymptotic minimax risks by constructing a family of hard MDP instances (Definition \ref{def:hard_mdp_family}) indexed by the discount and the minorization time. This allows us to conclude that our sample complexity upper bounds are optimal up to log factors. 

\begin{table}[tb] 
\centering
\begin{tabular}{c|c|c}
\toprule
Mixing Assumption & Sample Complexity & Relevant Theorems\\
\midrule
One optimal policy (Assumption \ref{assump:opt_m-doeblin})  
&$\tilde O(|S||A|\tmix^2(1-\gamma)^{-2}\epsilon^{-2})$  
&Theorem \ref{thm:sample_complexity_general}   \\
All optimal policies (Assumption \ref{assump:opt_unique_m-doeblin}, \ref{assump:opt_unif_m-doeblin_lip})
&$\tilde O(|S||A|\tmix(1-\gamma)^{-2}\epsilon^{-2})$    
&Theorem \ref{thm:sample_complexity_unq_lip} \\
All policies (Assumption \ref{assump:unif_m-doeblin}) 
&$\tilde O(|S||A|\tmix(1-\gamma)^{-2}\epsilon^{-2})$    
&Theorem \ref{thm:sample_complexity_unif_general_start}   \\
\midrule
Lower Bound: $|S||A| = O(1)$ &$\Omega (\tmix(1-\gamma)^{-2}\epsilon^{-2})$    & Theorems \ref{thm:local_lower_bd_unif_mix}, \ref{thm:global_lower_bd_unif_mix}   \\
\bottomrule
\end{tabular}
\caption{\label{tab:sample_complexity} 
Summary of sample complexity upper bounds, where $\tmix\leq (1-\gamma)\inv$ is the total variation mixing time for uniformly egodic chains, c.f. Theorem \ref{thm:assump_equiv}.
}
\end{table}

\par In this paper, we assume the availability of a simulator, also known as a generative model, that can generate independent state transitions for any input state-action pair. Given the significant memory and computational demands associated with vast state and action spaces, we adopt a model-free approach in which the algorithm only maintains a state-action value function, or $q$-function. Our learning objective is to achieve an estimator of the optimal $q^*$ function within $\epsilon$ absolute error in the sup norm. Our algorithm and analysis achieve the optimal non-asymptotic minimax complexity in this regard. Notably, the variance bounds established in this work can be readily applied to the analysis of model-based algorithms, such as \cite{azar2013}, and the same optimal sample complexity upper bounds as in this work should follow.

\subsection{Literature Review}
\begin{bold_title_par}{Markov Stability and Regeneration}There is a rich literature on the stability of Markovian stochastic systems using the regeneration 
idea~\citep{athreya1978,Nummelin1978split,Meyn1994StabilityGJN,Bramson2008StabilityQ,meyn_tweedie_glynn_2009}. It has also been a important theoretical tool for design and analysis of simulation \citep{Crane1997RegenerativeMtd,Henderson2001RegenerativeDE}, statistics \citep{Gilks1998RegenerationMCMC}, and machine learning \citep{Glynn2002HoeffdingUE,Lemanczyk2021BernsteinMC} procedures. We provide a brief review of this literature since we study mixing behavior of MDPs using these techniques.
\end{bold_title_par}
\begin{bold_title_par}{Sample Complexity of Discounted Tabular RL}The worst case sample complexity theory of tabular RL has been extensively studied in recent years. There are two types of modeling environments that inspires the model-base and model-free approaches to RL algorithmic designs. In a model-base approach, the controller tries to accumulate a data set to construct a empirical model of the underlying MDP and solve it by variants of the dynamic programming principle. \cite{azar2013,sidford2018near_opt,agarwal2020,li2022settling} propose algorithms and prove optimal upper bounds (the matching lower bound is proved in \cite{azar2013}) $\tilde \Theta(|S||A|(1-\gamma)^{-3}\epsilon^{-2})$ of the sample complexity to achieve $\epsilon$ error in using the model-based approach. On the other hand, the model-free approach only maintains lower dimensional statistics of the transition data, and iterative update them.  The celebrated Q-learning \citep{Watkins1992} and its generalizations are examples model-free algorithms. \cite{li2021QL_minmax} shows that the Q-learning have a minimax sample complexity $\tilde \Theta(|S||A|(1-\gamma)^{-4}\epsilon^{-2})$. Nevertheless, \cite{wainwright2019} proposes a variance-reduced variants of the $Q$-learning that achieves the aforementioned model-based sample complexity lower bound  $\tilde \Theta(|S||A|(1-\gamma)^{-3}\epsilon^{-2})$. Recent advances in sample complexity theory of Q-learning and its variants are propelled by the breakthroughs in finite time analysis of stochastic approximation (SA) algorithms. \cite{wainwright2019l_infty} proves a sample path bound for the SA recursion where the random operator (as oppose to the expectation) is monotone and a contraction. These strong assumptions yield path-wise error bounds that enable variance reduction techniques that help to achieve optimal complexity in \cite{wainwright2019}. In comparison, \cite{chen2020} establishes finite sample guarantees of SA only under a second moment bound on the martingale difference noise sequence.
\par The worst case analysis provides guarantees on the convergence rate across all instances of $\gamma$-discounted MDPs. Notably, instances that reach the complexity lower bound must involve a transition kernel and reward function that depends upon $\gamma$. In contrast, \cite{khamaru2021} delve into \textit{instance-dependent} settings, where the transition kernel and reward function remain fixed, yielding matching sample complexity upper and lower bounds. Furthermore, \cite{wang2023optimal} explore an intermediate scenario where MDPs are assumed to possess an upper bound on their mixing time. This intermediate setting holds particular significance to this paper's main objective, as elaborated upon in the introduction.

\end{bold_title_par}
\begin{bold_title_par}{Sample Complexity of Average Reward RL}Mixing MDPs arise naturally in RL settings where the controller's objective is to maximize the long run average reward per unit of time achieved by the control policy. \cite{Wang2017PrimalDualL,Jin2020} propose algorithms to solve the policy learning problem directly from the average reward MDP model, achieving a complexity dependence $\tilde O(|S||A|\tmix^2\epsilon^{-2})$. On the other hand,  \cite{jin_sidford2021,Wang2022} use a discounted MDP with large enough $\gamma$ to approximate the average reward MDP and prove worst case complexity bounds $\tilde O(|S||A|\tmix\epsilon^{-3})$. Notably, the contemporary work \cite{Wang2022} only makes assumption on the optimal $q$-function, which is equivalent to assuming an upper bound on $\tmix$ of one of the optimal policy in the worst case. Whereas the other aforementioned literature assume a uniform bound on $\tmix$ over all policy. Our results in the discounted case echo this behavior: only assuming mixing of one of the optimal policy will lead to a reduction in the complexity. 
\end{bold_title_par}

\section{Markov Decision Processes: Definitions}
Let $\cM = (S, A, r, P, \gamma)$ be an MDP, where $S$ and $A$ are finite state and action spaces, $r:|S|\times |A|\ra [0,1]$ the reward function, $P:|S|\times |A|\ra \cP(S,2^S)$ the transition kernel. $\gamma \in (0,1)$ is the discount factor. We assume that $S,A,r,\gamma$ are known, and we can draw samples from the transition measures $\set{P_{s,a},s\in S,a\in A}$ independently through a sampler (generative model). 
\par Let $\Omega = (|S|\times |A|)^{\Z_{\geq 0}}$ and let $\cF$ be the product $\sigma$-field be the underlying measureable space. Define the stochastic process $\set{(S_t,A_t),{t\geq 0}}$ by the point evaluation $S_t(\omega) = s_t,A_t(\omega) = a_t$ for all $t\geq 0$ for any $\omega = (s_0,a_0,s_1,a_1,\ds)\in\Omega$. At time $t\geq 0$ and state $s_t\in  S$, if action $a_t\in A$ is chosen, the decision maker will receive a reward $r(s,a)$, and then the law of the subsequent state is determined by the transition kernel $\cL(S_{t+1} |S_0 = s_0, \ds,S_{t}=s_t,A_t = a_t) = P_{s_t,a_t}(\cdot)$. It is well known that to achieve optimal decision making in the context of infinite horizon discounted MDPs (to be introduced), it suffices to consider policies that are stationary, Markov, and deterministic. Therefore, we will restrict ourselves to consider only this class of policies. 
\par Let $\Pi$ denote the class of stationary Markov deterministic policies; i.e. any $\pi\in \Pi$ can be seen as a function $\pi:S\ra A$. For $\pi\in \Pi$ and initial distribution $\mu\in\cP(S,2^S)$, there is a probability measure $Q_{\mu}^\pi$ on the product space  s.t. the chain $\set{(S_t,A_t),t\geq 0}$ has finite dimensional distributions
\[
Q^\pi_\mu(S_0 =  s_0, A_0 = a_0,\ds,A_t = a_t) = \mu(s_0)P_{s_0,\pi(s_0)}(s_1)P_{s_1,\pi(s_1)}(s_2)\ds P_{s_{t-1},\pi(s_{t-1})}(s_t)\1\set{\pi(s_i) = a_i,i\leq t}.
\]
Note that this also implies that $\set{S_t,t\geq 0}$ is a Markov chain under $Q_{\mu}^\pi$. Let $E_{\mu}^\pi$ denotes the expectation under under $Q_{\mu}^\pi$. For $\mu$ with full support $S$, the \textit{value function} $v^{\pi}(s)$ is defined via
\[
v^{\pi}(s) := E_\mu^{\pi}\sqbkcond{\sum_{t=0}^\infty \gamma^{t}r(S_t,A_t) }{S_0 = s},
\]
while the optimal value function is given by
\begin{equation}\label{eqn:opt_val_func}
v^*(s) := \max_{\pi\in\Pi} v^\pi(s),
\end{equation}
$\forall s\in S$. It can be seen as a vector $v^*\in\R^{|S|}$. Let $\Pi^*$ denote the set of all optimal stationary Markov deterministic policies; i.e. $\pi\in\Pi$ that achieves the maximum in \eqref{eqn:opt_val_func}. 
\par Let $P_{s,a}[v]$ denote the sum $\sum_{s'\in S}P_{s,a}(s')v(s')$. It is well known that the optimal value function is the unique solution of the following \textit{Bellman equation}:
\begin{equation}\label{eqn:Bellman_eqn_v}
v^*(s) = \max_{a\in A}\crbk{r(s,a) + \gamma P_{s,a}[v^*]}.
\end{equation}
\par To learn an optimal policy $\pi^*\in\Pi^*$, it is useful to introduce the $q$\textit{-function}: For any policy $\pi\in\Pi$,
\[
q^\pi(s,a) := r(s,a) + \gamma P_{s,a}[v^\pi],
\]
which can be identified with a vector $q^\pi\in \R^{|S|\times |A|}$. Define the notation $v(q)(s) = \max_{a\in A}q(s,a)$. When $\pi = \pi^*$ achieves the maximum in \eqref{eqn:opt_val_func}, we denote the corresponding optimal $q$-function $q^{\pi^*}$ by $q^*$ and hence $v(q^*) = v^*$. Note that $q^*$ is the unique solution to the Bellman equation for the $q$-function:
\begin{equation}\label{eqn:Bellman_eqn_q}
\begin{aligned}
q^*(s,a) &=r(s,a) + \gamma P_{s,a}\sqbk{\max_{b\in A} q^*(\cd,b)}\\
    &=r(s,a) + \gamma P_{s,a}\sqbk{v( q^*)}\\
    &= \cT(q^*)(s,a)
\end{aligned}
\end{equation}
where the mapping 
\begin{equation}\label{eqn:def_q_Bellman_eqn}
\cT:  q(s,a)\ra r(s,a) + \gamma P_{s,a}\sqbk{v( q)},\forall(s,a) \in S\times A
\end{equation} is known as the \textit{Bellman operator} for the $q$-function. Here we list two elementary properties of the Bellman operator and the $q$-function. The proofs can be found in \cite{Puterman1994}. First, $\cT$ is a $\gamma$-contraction in the $\|\cd\|_\infty$-norm; i.e. $\|\cT(q)-\cT(q')\|_\infty\leq \gamma\|q-q'\|$ for all $q,q'\in \R^{|S|\times |A|}$.
Then, a consequence of $\cT$ being a $\gamma$-contraction is 
\[
\|q^*\|_\infty\leq 1/(1-\gamma).
\]
\par A \textit{greedy policy} from a $q$-function is defined as
\[
\pi(q)(s) = \argmax{a\in A}q(s,a), 
\]
where one action is picked if there is a tie. It is easy to see that $\Pi^*\ni\pi^* = \arg\max_{a\in A}q^*(\cd,a) = \pi(q^*)$. Therefore, policy learning can be achieved if we can learn a good estimate of $q^*$. For $\pi \in \Pi$, we use $P^\pi : \R^{|S||A|}\ra  \R^{|S||A|}$ to denote the linear  operator 
\[
P^\pi : q(s,a)\ra \sum_{s'\in S}P_{s,a}(s')q(s',\pi(s)), \forall (s,a)\in S\times A.
\]
\par Under suitable mixing conditions as presented in \cite{Puterman1994}, it is useful to look at the \textit{span semi-norm} of the value and $q$-functions. For vector $v\in V = \R^{d}$, let $e$ be the vector with all entries equal to one and define
\begin{align*}
\spnorm{v} &:= \inf_{c\in\R}\|v-ce\|_\infty \\
&= \max_{1\le i\leq d}v_i - \min_{1\leq i\leq d}v_i. 
\end{align*}
Note that $\spnorm{\cd}$ is not a norm because $\spnorm{e} = 0$, but it is an induced norm on the quotient space $V\backslash\set{ce:c\in\R}$. We also note that $\spnorm{v}\leq 2\|v\|_\infty$ for all $v\in V$. However, $\spnorm{\cd}$ and $\norm{\cd}_\infty$ are not equivalent on $\R^d$.   
\par Naturally, the variance of the cumulative reward induced by the optimal policy plays a key role in  controlling the sample complexity of solving the MDP. Given a stationary deterministic Markov policy $\pi$, let us define and denote the variance of the cumulative reward by
\begin{equation}\label{eqn:var_tot_rwd}
\Psi^\pi(s,a):=\var_{s,a}^\pi\crbk{\sum_{k = 0}^\infty\gamma^kr_\pi(S_k)}. 
\end{equation}
Another standard deviation parameter to our interest is 
\begin{equation}\label{eqn:1-samp_var}
\sigma(q^\pi)(s,a) :=  \gamma \sqrt{(P_{s,a}[v(q^\pi)^2] - P_{s,a}[v(q^\pi)]^2 )}.
\end{equation}
We call this object the ``1-sample standard deviation" of the Bellman operator. It is the sampling standard deviation of the so-called \textit{1-sample Bellman operator}, which we will introduce in equation \eqref{eqn:1-sample_Bellman} in the algorithm analysis section. To analyze $\Psi^\pi$ and $\sigma(q^\pi)$, it's useful to define
\begin{equation}\label{eqn:pi_var}
\sigma^{\pi}(s,a):= \gamma \sqrt{(P_{s,a}^\pi[(q^\pi)^2] - P_{s,a}^{\pi}[q^\pi]^2 )}. 
\end{equation}
Note that for any optimal policy $\pi^*$, $v(q^*) = v^*$. So, $\sigma(q^*) = \sigma^{\pi^*}$. 

\section{Uniformly Ergodic Markov Chains and Mixing Metrics}
Let $\cM$ be a MDP. Given a stationary Markov deterministic policy $\pi\in\Pi$, the state process $\set{S_t,t\geq 0}$ will have transition kernel $P_\pi(s,s') = P_{s,\pi(s)}(s')$. In various applications, the state process induced by a reasonable policy will converge to a unique steady state regardless of the initial condition. In this section, we will formally characterize such behavior by introducing the notion of \textit{uniform ergodicity}. A uniformly ergodic Markov chain is one for which the marginal distribution of $S_t$ converges to its stationary distribution in total variation distance with uniform rate across all initial conditions. Recall the total variation distance between two probability measures $\mu,\nu$ on $2^S$ is
\[
\TV{\mu-\nu} := \sup_{A\in 2^S}|\mu(A)-\nu(A)|.
\]
We use the norm notation here because it is equivalent to the $l_1$ norm if we view $\mu$ and $\nu$ as vectors in $\R^{|S|}$; i.e. $\TV{\mu-\nu} = 2\inv \|\mu-\nu\|_1$. 

\par Uniform ergodicity is equivalent to the the transition kernel satisfying the \textit{Doeblin condition} \citep{meyn_tweedie_glynn_2009}. This equivalence permits the application of the \textit{split chain} representation \citep{athreya1978,Nummelin1978split} of the Markov chain, which exhibits favorable independence properties, as illustrated in Section \ref{section:split_chain}. Thus, the Doeblin condition will serve as our principal assumption and theoretical tool for the analysis of the sample complexity of mixing MDPs. To characterize the rate of mixing of a uniformly ergodic Markov chain, two metrics, namely $\tmix$ and $\tminor$, emerge naturally from the total variation and Doeblin condition characterizations. We will introduce these metrics in Section \ref{section:equiv_times}. While $\tmix$ is more commonly utilized in the literature on MDP complexity \citep{Wang2017PrimalDualL,Jin2020,jin_sidford2021}, we have found $\tminor$ to be a more suitable metric for our objectives. To facilitate the comparison of our complexity theories with those in the literature, we establish the equivalence of $\tmix$ and $\tminor$ up to a constant factor in Section \ref{section:equiv_times}.

\subsection{Uniform Ergodicity}
\begin{definition}[Uniform Ergodicity]\label{def:unif_ergodic}
    A Markov chain $\set{S_t,t\geq 0}$ with transition kernel $P$ is called uniformly ergodic if there exists probability measure $\eta$ for which 
    \[
    \sup_{s\in S}\TV{P^n(s,\cd)-\eta}\ra 0
    \]
\end{definition}
Note that $\eta$ must be the unique stationary distribution of $P$: By uniform ergodicity $(\eta P^{n+1}) (s)\ra \eta(s)$ and $(\eta P^{n} P) (s)\ra (\eta P)(s)$ for all $s\in S$ as $n\ra\infty$; i.e. $\eta = \eta P$. Let $\nu$ be any stationary distribution of $P$. Then, $\nu P^n = \nu$ and  $(\nu P^n)(s)-\eta(s)\ra 0$ for all $s\in S$; i.e. $\nu = \eta$. 
\begin{definition}[Doeblin Minorization Condition]\label{def:m-doeblin}
    A transition kernel $P$ satisfies the $(m,p)$-Doeblin condition for some $m\in \Z_{>0}$ and $p\in(0,1]$ if there exists a probability measure $\psi$ and stochastic kernel $R$ s.t. 
    \begin{equation}\label{eqn:minorization}
    P^m(s,s') = p\psi(s') + (1-p)R(s,s'). 
    \end{equation}
    for all $s,s'\in S$. The measure $\psi$ is called the \textit{minorization measure} and the stochastic kernel $R$ is call the \textit{residual kernel}. 
\end{definition}
Note that if $P$ is $(m,p)$-Doeblin, then it must be $(m,q)$-Doeblin for any $0< q\leq p$: 
\[
P^m =  p\psi + (1-p)R = q\psi + (1-p)R+(p-q)1\otimes \psi
\]
\par 
The following result explains the equivalence between the Doeblin condition, uniform ergodicity, and uniform geometric ergodicity. 
\begin{custom_thm}{UE}[Theorem 16.0.2 in \cite{meyn_tweedie_glynn_2009}]\label{thm:unif_ergodic_equiv_of_def}
    The following statements are equivalent:
    \begin{enumerate}
        \item There exists $n<\infty$ s.t.
        \[
        \sup_{s\in S}\TV{P^n(s,  \cd ) - \eta(\cd)}\leq \frac{1}{4}
        \]
        \item $\set{S_t,t\geq 0}$ is uniformly ergodic.
        \item There exists $r > 1$ and $c < \infty$ s.t. for all $n$
        \[
        \sup_{s\in S}\TV{P^n(x,\cd)-\eta(\cd)}\leq cr^{-n}. 
        \]
        \item $P$ is $(m,p)$-Doeblin for some $m\in \Z_{>0}$ and $p\in(0,1]$.
    \end{enumerate}
    Moreover, we have 
    \begin{equation}\label{eqn:unif_ergodic_geo_decay_rate}
    \sup_{s\in S}\TV{P^n(x,\cd)-\eta(\cd)}\leq 2(1-p)^{-\floor{n/m}}. 
    \end{equation}
\end{custom_thm}
Theorem \ref{thm:unif_ergodic_equiv_of_def} implies the existence of a Doeblin minorization structure of any uniformly ergodic Markov kernel. As we will discuss in the next section, the Doeblin condition will allow us to use the split chain technique, which underlies our analysis of the sample complexity of mixing MDPs.

\subsection{Doeblin Condition, Split Chain, and Regeneration}\label{section:split_chain}
In this section, we introduce the \textit{split chain} of a uniformly ergodic Chain $\set{S_t,t\geq 0}$. \cite{athreya1978} and \cite{Nummelin1978split} independently realized that a Markov chain with transition kernel admitting a decomposition of the form \eqref{eqn:minorization} can be simulated as follow: 
\begin{enumerate}
    \item Start from $t = 0$, and generate $X_0\sim \mu$, the initial distribution. 
    \item At time $t$, flip a coin $B_{t+m}$ with success probability $P(B_{t+m} = 1) = p$. 
    \item If $B_{t+m} = 1$, generate $S^*_{t+m}\sim \psi$; if not, generate $S^*_{t+m}\sim R(S^*_t,\cd)$. 
    \item Generate $S^*_{t+1},\ds, S^*_{t+m-1}$ from the conditional measure $P(\cd|S^*_{t}, S^*_{t+m})$. 
    \item Set $t \la t+m$ and go back to step 2. 
\end{enumerate}
The process $\set{S^*_t,t\geq 0}$ is known as the split chain. It can be shown that the original process $\set{S_t,t\geq 0}$ has the same law as $\set{S_t^*,t\geq 0}$. Define the wide sense \textit{regeneration times} $\tau_0 = 0$, $\tau_{j+1} = \inf\set{t>\tau_{j}:B_{t} = 1}$ and (wide sense) \textit{regeneration cycles} $W_{j+1} = (S_{\tau_j}^*,\ds,S_{\tau_{j+1}-1}^*,T_{j+1})$ where $T_{j+1} = \tau_{j+1}-\tau_j$ for $j\geq 1$. This construction has the following implications:
\begin{itemize}
    \item $S_{\tau_j}^*\sim \psi$ and $T_j \eqd m G$ where $G\sim \geom(p)$ for all $j\geq 1$. 
    \item The regeneration cycles are 1-dependent in the sense that $\set{W_j:1\leq j\leq k}$ and $\set{W_{j}:k+2\leq j}$ are independent for all $k\geq 1$. 
    \item The regeneration cycles $\set{W_j,j\geq 2}$ are identically distributed. 
\end{itemize}
See \cite{asmussen2003applied_prob_and_queues} for a detailed exposition. 
Because of these properties, the process $S^*$ is easier to work with. So, moving forward, whenever we analyze a $(m,p)$-Doeblin kernel $P$, the process $\set{S_t,t\geq 0}$ will refer to the split chain. Also, since $\set{\tau_j}$ are not stopping times w.r.t. the natural filtration generated by $\set{S_t}$, we define $\cF_t=\sigma(\set{S_j:j\leq t},\set{B_{mj}:mj\leq t})$. Clearly, $\set{\tau_j}$ are $\cF_t$-stopping times and $\set{S_t}$ is Markov w.r.t. $\cF_t$ as well. 

\subsection{Two Mixing Metrics and Their Equivalence}\label{section:equiv_times}

The favorable characteristics of the split chain will facilitate the analysis of the behavior of the MDP under mixing policies. Consequently, we aim to study the minimax sample complexity of policy learning for an MDP by postulating the existence of a $(m,p)$-Doeblin condition. This naturally leads to the quantification of the mixing intensity of the MDP by means of the minorization time; see below. However, in the literature of sample complexity theory for mixing MDPs \citep{Wang2017PrimalDualL,Jin2020,jin_sidford2021}, the intensity of mixing is usually quantified in terms of the mixing time. In this section, we formally introduce these two measures of mixing. Furthermore, we establish that, for a uniformly ergodic Markov chain, $\tmix$ and $\tminor$ are equivalent up to a multiplicative constant. This equivalence enables a direct comparison of our complexity outcomes, which use $\tminor$, with the existing theories.
\par Let $P$ be a uniformly ergodic stochastic kernel with stationary distribution $\eta$. By Theorem \ref{thm:unif_ergodic_equiv_of_def}, there exists $(m,p,\psi,R)$ s.t. $P^m = p\psi + (1-p)R$. 
\begin{definition}[Mixing and Minorization Times]\label{def:tmix_tminor}
Define the \textit{mixing time}
\[
\tmix(P) := \inf\set{t\geq 1:\sup_{s\in S} \TV{P^t(s,  \cd ) - \eta(\cd)}\leq \frac{1}{4}},
\]
and the \textit{minorization time}
\[
\tminor(P) = \inf\set{m/p: \inf_{s\in S} P^m(s,\cd )\geq p\phi(\cd) \text{ for some }\phi\in\cP(2^S)}.
\]
\end{definition}
We will drop the $P$ dependence when it is clear. 
\par Next, we aim to demonstrate the equivalence between these two mixing times up to a constant factor.
\begin{theorem}\label{thm:assump_equiv}
The two notion of mixing times $\tminor$ and $\tmix$ are equivalent up to constants: If $P$ is uniformly ergodic, then $\tminor\leq 22 \tmix \leq 22\log(16)\tminor$.
\end{theorem}
\par We note that proof idea of the direction $\tminor\leq c\tmix$ follows from \cite{aldous1997}, while $\tmix\leq C\tminor$ follows from Theorem \ref{thm:unif_ergodic_equiv_of_def}. We defer the proof to the appendix. 
\par Therefore, for a uniformly ergodic chain, quantitative knowledge of the mixing time and the ``best" minorization are equivalent up-to constant factors. As a consequence of this equivalence between $\tmix$ and $\tminor$, the complexity outcomes highlighted in Table \ref{tab:sample_complexity} can be substituted directly with $\tmix$.

\par Other than its theoretical convenience, $\tminor$ could be a more explicit metric than $\tmix$ in control of stochastic systems. For instance, consider a queueing system for which the sources of randomness are fully exogenous. A long inter-arrival time will imply that the system become empty and the state process regenerates. Therefore, we can obtain a natural upper bound on $\tminor$ from the frequency of observing long inter-arrival times. 

\section{Uniformly Ergodic MDPs: Algorithms and Sample Complexity Upper Bounds}\label{section:algos_and_cplx}
\par We have introduced uniform ergodicity and the minorization time $\tminor$ as an equivalent measure of mixing time and the split chain technique. Using $\tminor$ as the mixing criterion, in this section, we explore the worst case sample complexities of learning the optimal $q$-function under different stability assumptions of the controlled Markov chain. 
\par MDPs can exhibit different types of mixing behavior. However, since our primary focus is on developing a complexity theory for learning the optimal $q$-function, we seek algorithms that obtain optimal policies by producing progressively more precise estimates of $q^*$ as the sample size increases. Consequently, the asymptotic variability of the estimator should be influenced by the mixing characteristics of the optimal policies. Therefore, it is well-motivated to make assumptions about the mixing properties of the class of optimal policies. We first consider the most general one: 

\begin{assumption}\label{assump:opt_m-doeblin}
There exists an optimal stationary Markov deterministic policy $\pi^*\in\Pi^*$ s.t. the transition kernel $P_{\pi^*}$ is uniformly ergodic. 
\end{assumption}
In this setting, define 
\[
\tminor^* = \tminor(P_{\pi^*}). 
\]
\par To obtain an optimal sample complexity result, we make further uniform assumptions on the minorization times of kernels induced by $\pi^* \in \Pi^*$. In this paper, we consider the following two settings: 
\begin{assumption}\label{assump:opt_unique_m-doeblin}
$\Pi^* = \set{\pi^*}$ is a singleton. Moreover, $\pi^*$ satisfies Assumption \ref{assump:opt_m-doeblin}. 
\end{assumption}
If $\Pi^*$ is not a singleton, we instead impose a ``continuity in $\pi$" assumption on $P$ which we will refer to as Lipschitzness: 
\begin{assumption}\label{assump:opt_unif_m-doeblin_lip}
For any $\pi^*\in\Pi^*$, the transition kernel $P_{\pi^*}$ is uniformly ergodic, and the transition kernel satisfies a local Lipschitz condition: There is $L>0$ s.t. for all $q\in\R^{|S|\times |A|}$ and associated greedy policy $\pi(q)$,
\begin{equation}\label{eqn:lip_cond}
\norm{\crbk{P^{\pi(q)}-P^{\pi^*}}( q- q^*)}_{\infty}\leq L\norm{ q- q^*}_{\infty}^2.
\end{equation}
\end{assumption}
In the setting of Assumption  \ref{assump:opt_unique_m-doeblin} and \ref{assump:opt_unif_m-doeblin_lip}, we define
\[
\tminor^* = \max_{\pi^*\in\Pi^*}\tminor(P_{\pi^*}). 
\]
We note that the settings in Assumptions \ref{assump:opt_unique_m-doeblin} and \ref{assump:opt_unif_m-doeblin_lip} are considered in the Q-learning literature \citep{khamaru2021,li2023PR_averaging_Ql}. Moreover, Assumption \ref{assump:opt_unif_m-doeblin_lip} is implied by \ref{assump:opt_unique_m-doeblin} with $L = 4/\zeta$ where $\zeta$ is the optimality gap of the optimal policy class defined as $\zeta:= \min_{s\in S}\min_{\pi\in\Pi\backslash\Pi^*}|v^*(s) - q^*(s,\pi(s))|$; see Lemma B.1 of \cite{li2023PR_averaging_Ql}.
\par Even though the asymptotic performance of a convergent algorithm should only depend on the mixing characteristics of the optimal policies, for a prescribed accuracy level $\epsilon$, assumptions on mixing characteristics of non-optimal policies could potentially increase the performance as well. Moreover, in applications of MDPs, we typically have little knowledge of the class of optimal policies a priori. However, as mentioned earlier, there may be settings in which all policies will induce mixing Markov chains with a uniform bound on the minorization times. This leads to the following assumption: 
\begin{assumption}\label{assump:unif_m-doeblin}
For all $\pi\in \Pi$, the transition kernel $P_{\pi}$ is uniformly ergodic.
\end{assumption}
In this setting, we define
\[
\tminor = \max_{\pi\in\Pi}\tminor(P_{\pi}). 
\]
Since $\Pi$ is finite, the above $\max$ is always achieved and $\tminor < \infty$. 
\begin{remark}
The minimax sample complexity of tabular MPDs is well understood when $\tminor,\tminor^*=\Omega( (1-\gamma)\inv)$. We will assume the interesting case when $\tminor,\tminor^* \leq (1-\gamma)\inv $. Moreover, for the convenience of our analysis, we also assume that, w.l.o.g, $p^{\pi}\leq 1/2$. 
\end{remark}

\par In the following developments, we will demonstrate the impact of mixing Assumptions \ref{assump:opt_m-doeblin}, \ref{assump:opt_unique_m-doeblin}, \ref{assump:opt_unif_m-doeblin_lip}, and \ref{assump:unif_m-doeblin} on algorithmic performance, leading to a improvement in sample complexity by a factor of $(1-\gamma)\inv$ when compared to the minimax lower bound without considering mixing time assumptions ($\tilde O((1-\gamma)^{-3})$). In particular, the sample complexity upper bounds in Theorems \ref{thm:sample_complexity_general}, \ref{thm:sample_complexity_unq_lip}, and \ref{thm:sample_complexity_unif_general_start} are now enhanced to $\tilde O((1-\gamma)^{-2})$.

\subsection{Q-learning and Wainwright's Variance Reduction}
We first introduce the algorithmic frameworks that underlie our complexity results: the Q-learning algorithm and Wainwright's variance reduction technique. 
\subsubsection{The Q-learning Algorithm}
\newcommand{\ql}{\text{QL}}
Define $\set{\widehat\cT_{k},k\geq 0}$ as a sequence of i.i.d.  \textit{1-sample empirical Bellman operators}: i.e. for each $s,a\in S\times A$, sample $S'\sim P(\cd|s,a)$ independently and assign 
\begin{equation}\label{eqn:1-sample_Bellman}
\widehat\cT_{k+1}(q)(s,a) := r(s,a) + \gamma \max_{b\in A}q(S',b).
\end{equation}

\par The synchronous Q-learning $\ql(k_0)$ with $k_0$ number of iterations can be expressed in terms of these empirical Bellman operators, as displayed in Algorithm \ref{alg:q-learning}.  It is easily seen that $\sigma^2(q)(s,a) = \var(\widehat \cT(q)(s,a))$. This echoes the previous definition \eqref{eqn:1-samp_var}. 

\begin{algorithm}[htb]
   \caption{Q-learning: $\ql(k_0)$}
   \label{alg:q-learning}
\begin{algorithmic}
   \STATE {\bfseries Initialization:} $q_1 \equiv 0$; $k = 1$. 
   \FOR{$1\leq k\leq k_0$}
   \STATE Sample $\widehat \cT_{k+1}$ the 1-sample empirical Bellman operator. 
   \STATE Compute the Q-learning update
   \[
   q_{k+1} = (1-\lambda_{k})q_{k} + \lambda_k\widehat\cT_{k+1}(q_{k})
   \]
   with stepsize $\lambda_k = 1/(1+(1-\gamma)k)$. 
   \ENDFOR
   \RETURN $ q_{k_0+1}$
\end{algorithmic}
\end{algorithm}

\par It turns out that the algorithms we develop in the future sections will require a reasonably good initialization. Typically, such an initialization can be achieved by first running the Q-learning Algorithm \ref{alg:q-learning}. Therefore, it is useful to have an error bound for this Q-learning algorithm. From \cite{wainwright2019l_infty,wainwright2019}, we have the following proposition.

\begin{proposition}[Section 4.1.2 in \cite{wainwright2019}]\label{prop:q-learning_bound}
For any $k_0\geq 1$, let $q_{k_0+1}$ be the output of $\ql(k_0)$. Then, there exists absolute constant $c$ so that with probability at least $1-\delta$, we have
\begin{align*}
&\quad \|q_{k_0+1} - q^*\|_\infty\\
&\leq \frac{\|q^*\|_\infty}{(1-\gamma)k_0} + c  \crbk{\frac{\|\sigma(q^*)\|_\infty}{(1-\gamma)^{3/2}\sqrt{k_0}}\log\crbk{\frac{2|S||A|k}{\delta}}^{1/2} + \frac{\spnorm{q^*}}{(1-\gamma)^2k_0}\log\crbk{\frac{2e|S||A|k_0(1+(1-\gamma)k_0)}{\delta}}}.
\end{align*}
\end{proposition}
To use this proposition, we need the following lemma, which is a consequence of Proposition \ref{prop:q_osc}. 
\begin{lemma}\label{lemma:1-sample_Bellman_var_bound} Under Assumption \ref{assump:opt_m-doeblin}, the following bounds hold: 
\[
\spnorm{q^*}\leq 3\tminor^*\quad\text{and}\quad \sigma^2(q^*)(s,a)\leq 4\gamma^2(\tminor^*)^2.
\]
\end{lemma}
\begin{remark}
Combining Proposition \ref{prop:q-learning_bound} and Lemma \ref{lemma:1-sample_Bellman_var_bound}, we have a sample complexity bound for this Q-learning algorithm under Assumption \ref{assump:opt_m-doeblin}: $\tilde O(|S||A|(\tminor^*)^2\epsilon^{-2}(1-\gamma)^{-3})$. \cite{li2021QL_minmax} have shown that the worst case sample complexity of the Q-learning Algorithm \ref{alg:q-learning} should be $\tilde O(|S||A|/(\epsilon^2(1-\gamma)^4))$ uniformly in all MDP instance $\cM$ with discount factor $\gamma$. So, the sample complexity bound implied by Proposition \ref{prop:q-learning_bound} is is not optimal when $\tminor^* = \Theta((1-\gamma)\inv)$. 
\end{remark}

\subsubsection{Wainwright's Variance Reduction}

\cite{li2021QL_minmax} established that the Q-learning Algorithm \ref{alg:q-learning} is not minimax optimal when used to learn the optimal action-value function $q^*$. In the pursuit of an optimal model-free algorithm, \cite{wainwright2019} proposed a variance-reduced variant of Q-learning, as outlined in Algorithm \ref{alg:vr_q-learning}. 
\newcommand{\vrql}{\text{VRQL}}
\begin{algorithm}[htb]
   \caption{Variance-reduced Q-learning: $\vrql(q, k^*,l^*,\set{n_l: 1\leq l\leq l^*})$}
   \label{alg:vr_q-learning}
\begin{algorithmic}
   \STATE {\bfseries Initialization:} $\hat q_0 = q$; $l = 1$; $k = 1$. 
   \FOR{$1\leq l\leq l^*$}
   \STATE Sample i.i.d. $\set{\tilde\cT_{l,j}:j = 1,\ds,n_l}$ 1-sample empirical Bellman operators defined in \eqref{eqn:1-sample_Bellman}.  
   \STATE Compute the recentering 
   \[
   \widetilde\cT_{l}(\hat q_{l-1}) := \frac{1}{n_l}\sum_{j=1}^{n_l}\widetilde\cT_{l,j}(\hat q_{l-1}).
   \]
   \STATE Set $q_{l,1} = \hat q_{l-1}$.
   \FOR{$1\leq k\leq k^*$}
   \STATE Sample $\widehat \cT_{l,k+1}$ the 1-sample empirical Bellman operator. 
   \STATE Compute the recentered Q-learning update
   \begin{equation}\label{eqn:vr_q-learning_update}
   q_{l,k+1} = (1-\lambda_{k})q_{l,k} + \lambda_k\crbk{\widehat\cT_{l,k+1}(q_{l,k}) - \widehat\cT_{l,k+1}(\hat q_{l-1}) + \widetilde \cT_{l}(\hat q_{l-1})}
   \end{equation}
   with stepsize $\lambda_k = 1/(1+(1-\gamma)k)$. 
   \ENDFOR
   \STATE Set $\hat q_{l} = q_{l,k^*+1}$.
   \ENDFOR
   \RETURN $\hat q_{l^*}$
\end{algorithmic}
\end{algorithm}
\par This variant has been shown to achieve the minimax optimal sample complexity $\tilde O(|S||A|(1-\gamma)^{-3})$ for any Markov decision process (MDP) instance $\mathcal{M}$, subject to optimal selection of the parameters $q, l^*,\set{n_l,1\leq l\leq l^*},k^*$ and initialization $\hat{q}_0$. In the subsequent sections, we explore modifications to Algorithm \ref{alg:vr_q-learning} that enable it to attain the sample complexity upper bounds presented in Table \ref{tab:sample_complexity}. 
\par Notably, the variance-reduced Q-learning variant generally satisfies the path-wise error bound of the form $\set{\|\hat q_l-q^*\|_\infty\leq b_l,\forall l\leq l^*}$ with high probability, due to the \textit{empirical} Bellman operators being $\gamma$-contractions as well. This desirable characteristic makes it a versatile tool for algorithmic designs.

\par The variance-reduced Q-learning algorithm employs an inner loop indexed by $k$ and an outer loop indexed by $l$, where each outer loop is referred to as an epoch. Within each epoch, the inner loop executes a ``recentered" version of synchronous Q-learning. By selecting a series of empirical Bellman operators $\widetilde\cT_l$ with exponentially increasing sample sizes as a function of $l$, the estimators $\set{\hat{q}_l}$ generated by the epochs achieve an exponentially decreasing error, namely $\|\hat{q}_{l} - q^*\|_{\infty} \leq 2^{-l}b$, with high probability.

\subsection{The General Setting: Assumption \ref{assump:opt_m-doeblin}}\label{section:general_setting_cplx_bd}
\par In this section, we analyze the convergence and worst case sample complexity bound of Algorithm \ref{alg:vr_q-learning} under the general Assumption \ref{assump:opt_m-doeblin}, which posits that there exists one optimal policy that induces a $(m,p)$-Doeblin kernel. 

\par We aim to demonstrate that by combining Algorithms \ref{alg:q-learning} and \ref{alg:vr_q-learning} under the aforementioned notion of mixing, one can produce an estimator of $q^*$ within an $\epsilon$ absolute error with high probability using at most $\tilde O(|S||A|(\tminor^*)^2(1-\gamma)^{-2}\epsilon^{-2})$ number of samples. This removes a power on $(1-\gamma)^{-1}$ from the minimax sample complexity lower and upper bounds established in \cite{azar2013}. This is quite a surprising improvement as we are only assuming that one of the optimal policies induces mixing. 
\par The central result in this section is the following sample complexity upper bound in Theorem \ref{thm:sample_complexity_general}. It is is an immediate consequence of Proposition \ref{prop:general_error_high_prob_bd}. 
\begin{theorem}
\label{thm:sample_complexity_general}
For any MDP instance satisfying Assumption \ref{assump:opt_m-doeblin}, the sample complexity running the procedure    specified by Proposition \ref{prop:general_error_high_prob_bd} with $l^* = \ceil{\log_2(\tminor^*/\epsilon))}$ to achieve an estimator of $q^*$ with absolute error $\epsilon\leq \tminor^*$ w.p. at least $1-\delta$ is
\[
\tilde O\crbk{ \frac{|S||A| (\tminor^*)^2}{(1-\gamma)^2\epsilon^2}\max\set{1,\frac{\epsilon^2}{(\tminor^*)^2(1-\gamma)}}}.
\]
\end{theorem}
\begin{remark}
For $\epsilon\leq\tminor\sqrt{1-\gamma}$, this sample complexity bound  reduce to $\tilde O(|S||A|(\tminor^*)^2(1-\gamma)^{-2})$. 
\end{remark}

\par To arrive at this sample complexity upper bound, we analyze the statistical properties of Algorithm \ref{alg:vr_q-learning} under Assumption \ref{assump:opt_m-doeblin}. At a high-level, an upper bound of the asymptotic variance of estimating $q^*$ using the variance-reduced Q-learning can be established using the oscillation of $q^*$. According to Proposition \ref{prop:q_osc}, the oscillation of $q^*$ can again be bounded by the minorization time. Since all optimal policies induce the same $q^*$, the minorization time of any optimal policy can be used to control the convergence rate. 
\par We will motivate and state two intermediate propositions that underlie the proof of Theorem \ref{thm:sample_complexity_general}. The proofs and relevant developments of the key results in this section are deferred to Section \ref{section:vrql_analysis_general}. An important result is the following property of Algorithm \ref{alg:vr_q-learning}. 
\begin{proposition}\label{prop:opt_pi_vr_algo_high_prob_bound}
Given $\hat q_0$ satisfies $\|\hat q_0 - q^*\|_\infty\leq b$ w.p. at least $1-\delta/(l^*+1)$ and $b/\epsilon \geq 1 $, then choosing   
\begin{equation}\label{eqn:n_l_lb_opt_pi_prop}
\begin{aligned}
k^* &\geq c_1\frac{1}{(1-\gamma)^3}\log\crbk{\frac{8(l^*+1)|S||A|}{(1-\gamma)\delta}}\\
n_l &\geq c_2\frac{1}{(1-\gamma)^2}\crbk{\frac{\|\sigma(q^*)\|_\infty + (1-\gamma)\|q^*\|_\infty}{2^{-l+1}b}+1}^2\log\crbk{\frac{8 (l^*+1) |S||A|}{\delta}},
\end{aligned}
\end{equation}
and the total number of outter iterations
\begin{equation}\label{eqn:lstar_opt_pi_prop}
l^*  \geq  \ceil{\log_2\crbk{\frac{b}{\epsilon}}},
\end{equation}
 we have that the output $\hat q_{l^*} = \vrql(\hat q_0,k^*,l^*,\set{n_l: 1\leq l\leq l^*})$ satisfies
$\|\hat q_{l^*} - q^*\|_\infty\leq \epsilon$ w.p. at least $1-\delta$.
\end{proposition}

\par 
It should be noted that the algorithm specified in Proposition \ref{prop:opt_pi_vr_algo_high_prob_bound} may not be directly implementable due to the lack of a priori knowledge of $\|\sigma(q^*)\|_\infty$ in the definition of $\set{n_l}$. Fortunately, this can be addressed by first running Algorithm \ref{alg:q-learning} to obtain an initialization $\hat q_0$ s.t. $\|\hat q_0-q^*\|_\infty\leq b = O(\tminor^*)$ with high probability. By Lemma \ref{lemma:1-sample_Bellman_var_bound}, this initialization can cancel out the $\|\sigma(q^*)\|_\infty$ term in the numerator of  $\set{n_l}$ in \eqref{eqn:n_l_lb_opt_pi_prop}. 

\par To simplify notation, we define
\begin{equation}\label{eqn:def_d}
d := |S||A|\crbk{\ceil{\log\crbk{\frac{1}{(1-\gamma)\epsilon}}}+1}.
\end{equation}
Concretely, we choose 
\begin{equation}\label{eqn:k^*_n_l_choice}
\begin{aligned}
k^* &= c_1\frac{1}{(1-\gamma)^3}\log\crbk{\frac{8d}{(1-\gamma)\delta}}\\
n_l &= 3c_2\frac{4^l}{(1-\gamma)^2}\log\crbk{\frac{8 d}{\delta}}
\end{aligned}
\end{equation}
with the same $c_1,c_2$ as in \eqref{eqn:n_l_lb_opt_pi_prop}. 
Moreover, the initialization $\hat q_0$ is obtained from running Algorithm \ref{alg:q-learning} with \begin{equation}\label{eqn:k_0_choice_general}
\begin{aligned}
k_0 = c_0\frac{1}{(1-\gamma)^{3}}\log\crbk{\frac{2d}{(1-\gamma)\delta}}
\end{aligned}
\end{equation}
for some sufficiently large absolute constant $c_0$. By Proposition \ref{prop:q-learning_bound}, this should guarantee $\|\hat q_0-q^*\|_\infty\leq\tminor^*$ w.p. at least $1-\delta/( \ceil{\log_2((1-\gamma)\inv\epsilon\inv )} + 1)$. Therefore, by Proposition \ref{prop:opt_pi_vr_algo_high_prob_bound}, we have the following finite time algorithmic error guarantee:
\begin{proposition}\label{prop:general_error_high_prob_bd}
Assume Assumption \ref{assump:opt_m-doeblin}. We run the combined Algorithm \ref{alg:q-learning} and \ref{alg:vr_q-learning} with parameters specified in \eqref{eqn:k^*_n_l_choice} and \eqref{eqn:k_0_choice_general}; i.e. $\vrql(\ql(k_0),k^*,\set{n_l},l^*)$. Then, for $\epsilon\leq \tminor^*$ and $ \ceil{\log_2(\tminor^*/\epsilon))}\leq l^*\leq \ceil{\log_2((1-\gamma)\inv\epsilon\inv )}$, the estimator $\hat q_{l^*}$ produced by this combined procedure satisfies $\|\hat q_{l^*}-q^*\|_\infty\leq \epsilon$ w.p. at least $1-\delta$. 
\end{proposition}
\begin{remark}
Notice that this proposition allows a range of $l^*$. Doing this allows the user to run the algorithm without the knowledge of $\tminor$. However, to achieve the optimized complexity in Theorem \ref{thm:sample_complexity_general}, we need to choose $l^* = \ceil{\log_2(\tminor^*/\epsilon))}$. 
\end{remark}
Theorem \ref{thm:sample_complexity_general} will then follow from Proposition \ref{prop:general_error_high_prob_bd} by calculating the total number of sample used. 

\subsection{The Lipschitz Setting: Assumptions \ref{assump:opt_unique_m-doeblin} and \ref{assump:opt_unif_m-doeblin_lip}}\label{section:lip_cplx_bd}
In the previous section, we prove that in the general setting, the variance-reduced $q$-learning algorithm has worst case sample complexity upper bound scale as $\tilde O(|S||A|(\tminor^*)^2\epsilon^{-2}(1-\gamma)^{-2})$ uniformly for $\tminor^*\leq 1/(1-\gamma)$. This is not optimal when $1/\sqrt{1-\gamma}\leq \tminor^*\leq 1/(1-\gamma)$. In particular, there are known algorithms and analysis \citep{azar2013,wainwright2019} with worst case sample complexity upper bound $\tilde O(|S||A|\epsilon^{-2}(1-\gamma)^{-3})$. In this section, we develop a better sample complexity bound under the Assumption \ref{assump:opt_unique_m-doeblin} or \ref{assump:opt_unif_m-doeblin_lip}. The proof of the results in this section provided in Section \ref{section:vrql_unq_lip}.
\par Define the variance vector
\begin{equation}\label{eqn:minimax_risk_parm}
\nu^{\pi}(\cM)^2 :=\diag(\cov((I-\gamma P^{\pi})\inv \widehat\cT(q^*))).
\end{equation}
Here, the notation $\diag(\cd)$ maps a square matrix $M$ to a column vector containing the diagonal elements of $M$. In \cite{khamaru2021}, the authors establish that $\max_{\pi\in\Pi^*}\|\nu^{\pi}(\cM)\|_\infty$ serves as a lower bound on the non-asymptotic minimax risk of estimating $q^*$ for a specific MDP instance $\cM$. They also prove an upper bound that matches the lower bound under certain conditions, such as the uniqueness of the optimal policy or the satisfaction of the Lipschitz condition \eqref{eqn:lip_cond} of the transition kernel. It turns out that, as a direct consequence of Corollary \ref{cor:asymp_var_bound}, $\max_{\pi\in\Pi^*}\|\nu^{\pi}(\cM)\|_\infty$ satisfies the upper bound $ O(\tminor^*(1-\gamma)^{-2})$, see Lemma \ref{lemma:asym_var_bound}. This will imply a worst-case sample complexity upper bound of $\tilde O(\tminor^*(1-\gamma)^{-2}\epsilon^{-2})$, provided that the sample size is sufficiently large.

\par Recall that $\Pi^*$ is the set of optimal stationary deterministc Markov policies. Define the optimality gap 
\begin{align*}
    \Delta = \min_{\pi\in\Pi\backslash\Pi^*}\|q^* - (r + \gamma P^{\pi}q^*)\|_\infty.
\end{align*}
The main result in this section is the following theorem. 

\begin{theorem}\label{thm:sample_complexity_unq_lip}
Suppose Assumption \ref{assump:opt_unique_m-doeblin} or \ref{assump:opt_unif_m-doeblin_lip} hold. Let parameters $k_0$ as in \eqref{eqn:k_0_unq_lip} and $l^*,\set{n_l},k^*$ as in Theorem \ref{thm:unq_lip_sample_bound}. Then, for all sufficiently small $\epsilon$, the combined algorithm $\vrql(\ql(k_0),k^*,\set{n_l},l^*)$ achieves an estimator of $q^*$ with absolute error $\epsilon$ in the sup norm w.p. at least $1-\delta$ using 
\[
\tilde O\crbk{\frac{\tminor^*}{\epsilon^2 (1-\gamma)^2}\max\set{1,\epsilon^2\tminor^*,\frac{\epsilon^2}{(1-\gamma)\tminor^*}}}
\]
number of samples. 
Specifically, it is sufficient under Assumption \ref{assump:opt_unique_m-doeblin} for 
\begin{equation}\label{eqn:eps_unq}
 \epsilon \leq\Delta\sqrt{\tminor^*(1-\gamma)^{1+\beta}};
\end{equation}
 or under Assumption \ref{assump:opt_unif_m-doeblin_lip} for 
\begin{equation}\label{eqn:eps_lip}
 \epsilon\leq \max\set{\Delta,(1-\gamma)/L}\sqrt{\tminor^*(1-\gamma)^{1+\beta}}
\end{equation}
 for any $\beta > 0$. 
\end{theorem}
\begin{remark}
If we further require that $\epsilon\leq \min\set{((1-\gamma)\tminor^*)^{1/2},(\tminor^*)^{-1/2}}$, the worst case sample complexity upper bound becomes $\tilde O(|S||A|\tminor^*\epsilon^{-2}(1-\gamma)^{-2})$. 
\end{remark}

\par The proof of Theorem \ref{thm:sample_complexity_unq_lip} is based on the following key result in \cite{khamaru2021}:
\begin{custom_thm}{K2}[Theorem 2 of \cite{khamaru2021}]\label{thm:unq_lip_sample_bound}
Suppose either that the optimal policy $\pi^*$ is unique or that the Lipschitz condition \eqref{eqn:lip_cond} holds. We run Algorithm \ref{alg:vr_q-learning} with initialization $\hat q$ satisfying $\|\hat q_0 -q^*\|_\infty\leq 1/\sqrt{1-\gamma}$. Also, for sufficiently large $n^*$ satisfying that there exists a $\beta > 0$
\begin{itemize}
    \item In the case of unique $\pi^*$, 
    \[
    \frac{n^*}{(\log n^*)^2}\geq c\frac{\log(|S||A|/\delta)}{(1-\gamma)^3}\max\set{1,\frac{1}{\Delta^2(1-\gamma)^\beta}}.
    \]
    \item In the case of Lipschitz condition \eqref{eqn:lip_cond} hold, 
    \[
    \frac{n^*}{(\log n^*)^2}\geq c\frac{\log(|S||A|/\delta)}{(1-\gamma)^{3+\beta}}\min\set{\frac{1}{\Delta^2},\frac{L^2}{(1-\gamma)^2}}.
    \]
\end{itemize}
where $c$ is some large absolute constant. Choose
\begin{equation}\label{eqn:parms_unq_lip_thm}
\begin{aligned}
l^* &= \ceil{\log_4\crbk{\frac{n^*(1-\gamma)^2}{8\log\crbk{(16|S||A|/\delta)\log n^*}}}}\\
n_l &= \frac{4^l}{(1-\gamma)^2}\log_4\crbk{\frac{16l^*|S||A|}{\delta}}\\
k^* &= \frac{n^*}{2l^*}.
\end{aligned}
\end{equation}
We have that w.p. at least $1-\delta$
\begin{equation}\label{eqn:unq_lip_q_error_high_prob}
\|\hat q_{l^*} - q^*\|\leq c'\crbk{\sqrt{\frac{\log_4(8|S||A|l^*/\delta)}{n^*}}\max_{\pi\in\Pi^*}\|\nu^{\pi}(\cM)\|_\infty + \frac{\spnorm{q^*}\log_4(8|S||A|l^*/\delta)}{(1-\gamma)n^*}}.
\end{equation}
\end{custom_thm}
\par As explained in the discussion after Assumption \ref{assump:opt_unif_m-doeblin_lip}, noted by \cite{li2023PR_averaging_Ql}, if the optimal policy is unique with optimality gap $\Delta$, then Assumption \ref{assump:opt_unif_m-doeblin_lip} holds with $L = 4/\Delta$. This is consistent with the requirement of $n^*$ in  Theorem \ref{thm:unq_lip_sample_bound}.  
\par Theorem \ref{thm:unq_lip_sample_bound} suggests that the finite sample convergence rate can be controlled by the local minimax rate parameter $\max_{\pi\in\Pi^*}\|\nu^{\pi}(\cM)\|_\infty$. Observe that if the entries of $\nu^{\pi}(\cM)$ can be uniformly upper bounded by $O(\tminor^*(1-\gamma)^{-2})$ for all $\pi\in\Pi^*$, then the error bound \eqref{eqn:unq_lip_q_error_high_prob} should imply the desired sample complexity upper bound. This is indeed the case if all optimal policies induce Doeblin chains with a unifrom minorization time upper bound. 
\begin{lemma}\label{lemma:asym_var_bound}
Suppose that for all $\pi\in \Pi^*$, the kernel $P_{\pi}$ uniformly ergodic with 
\[
\tminor^* := \max_{\pi \in \Pi^*}\tminor(P_{\pi}). 
\]
Then
\[
\max_{\pi\in\Pi^*}\|\nu^{\pi}(\cM)^2\|_\infty\leq \frac{ 6400 \tminor^*}{(1-\gamma)^2}. 
\]\end{lemma}

To use Theorem \ref{thm:unq_lip_sample_bound}, we still need a initialization $\|\hat q_0 - q^*\|_\infty\leq 1/\sqrt{1-\gamma}$. As in the previous section, this can be achieved w.p. at least $1-\delta/2$ by running Algorithm \ref{alg:q-learning} with  
\begin{equation}\label{eqn:k_0_unq_lip}
    k_0 = c_0\frac{(\tminor^*)^2}{(1-\gamma)^2}\log\crbk{\frac{4|S||A|}{(1-\gamma)\delta}}
\end{equation}
for sufficiently large constant $c_0$. Then we start the Algorithm \ref{alg:vr_q-learning} with this initialization and parameters specified in Theorem \ref{thm:unq_lip_sample_bound}. This will give the desired sample complexity upper bound in Theorem \ref{thm:sample_complexity_unq_lip}. 

\begin{remark}
\par This method of generating $\hat q_0$, however, requires the knowledge of $\tminor$ in order to specify $k_0$. To resolve this, instead, we can use
    \[
     k_0 = c_0\frac{1}{(1-\gamma)^4}\log\crbk{\frac{4|S||A|}{(1-\gamma)\delta}}
     \]
     to initialize $\hat q_0$. This will result in a sample complexity upper bound of $\tilde O\crbk{ |S||A|\tminor (1-\gamma)^{-2}\epsilon^{-2}}$ when $\epsilon\leq 1-\gamma$ on top of satisfying the conditions in Theorem \ref{thm:sample_complexity_unq_lip}. This is less efficient in terms of the range of $\epsilon$. So, taking a complexity theoretical perspective, we omit a formal presentation and proofs of this initialization method.  
     \par Also, note that if compared to the procedure in Proposition \ref{prop:err_high_prob_bd_unif_mixing}, one might attempt to use the procedure in Proposition \ref{prop:general_error_high_prob_bd} as a more efficient initialization method. However, this is not possible due to the requirement that $\epsilon\leq \sqrt{\tminor}$ in Proposition \ref{prop:general_error_high_prob_bd}. 
\end{remark}

\subsection{ Uniform Mixing: Assupmtion \ref{assump:unif_m-doeblin}}\label{section:unif_mix_cplx_bd}
\par Assuming a uniform minorization time upper bound as in Assumption \ref{assump:unif_m-doeblin}, in this section, we will construct an algorithm and outline the proof of the following sample complexity upper bound. 

\begin{theorem}\label{thm:sample_complexity_unif_general_start}
    Assume Assumption \ref{assump:unif_m-doeblin}. Run the procedure in Proposition \ref{prop:err_high_prob_bd_unif_mixing} with $l^* = \ceil{\log_2(\sqrt{\tminor}/\epsilon)}$. The worst case sample complexity to achieve an estimator of $q^*$ with absolute error $\epsilon\leq\sqrt{ \tminor}$ in the infinity norm w.p. at least $1-\delta$ is
    \[
    \tilde O\crbk{ \frac{|S||A| \tminor}{(1-\gamma)^2\epsilon^2}\max\set{\frac{\epsilon^2}{\tminor(1-\gamma)},1}}.
    \]
\end{theorem}
\begin{remark}Note that when $\epsilon\leq \sqrt{\tminor(1-\gamma)}$, Theorem \ref{thm:sample_complexity_unif_general_start} implies a  sample complexity upper bound of $\tilde O\crbk{ |S||A|\tminor (1-\gamma)^{-2}\epsilon^{-2}}$. Compare to Theorem \ref{thm:sample_complexity_unq_lip}, this gives a larger range of $\epsilon$ in which the optimal sample complexity bound holds. 
\end{remark}

\par To develop this sample complexity upper bound, we highlight some preliminary observations of Algorithm \ref{alg:vr_q-learning} made by \cite{wainwright2019}. Consider the inner loop update of the variance-reduced Q-learning \eqref{eqn:vr_q-learning_update}. \cite{wainwright2019} defines
\[
\widehat\cJ_{l,k+1}(q) = \widehat\cT_{l,k+1}(q) - \widehat\cT_{l,k+1}(\hat q_{l-1}) + \widetilde \cT_{l}(\hat q_{l-1});\qquad \cJ_{l}(q) = \cT(q) - \cT(\hat q_{l-1}) + \widetilde \cT_{l}(\hat q_{l-1}).
\]
Then, \eqref{eqn:vr_q-learning_update} becomes 
\[
q_{l,k+1} = (1-\lambda_{k})q_{l,k} + \lambda_k\widehat\cJ_{l,k+1}(q_{l,k}).
\]
It is easy to check that $\cJ_l$ is a $\gamma$-contraction, and $E[\widehat\cJ_{l,k+1}(q)|\hat q_{l-1}] = \cJ_{l}(q)$. So, the inner loop of the variance-reduced Q-learning can be thought of as a stochastic approximation to the unique fixed point $\bar q_l$ of $\cJ_l$. 
\par This observation lead us to considering a perturbed reward function
\begin{equation}
\bar r_l := r- \cT(\hat q_{l-1}) + \widetilde \cT_{l}(\hat q_{l-1}).\label{eqn:def_bar_r_l}
\end{equation}
Then the fixed point $\bar q_l$ of $\cJ_l$ uniquely satisfies the Bellman equation
\begin{equation}
\bar q_l = \bar r_l(s,a) +P_{s,a}v(\bar q_l). 
\label{eqn:def_bar_q_l}
\end{equation}
With these reformulation of the variance-reduced Q-learning recursion, we can bound the error after $l$ epoch by the triangle inequality
\[
\|\hat q_l-q^*\|_\infty \leq \|\hat q_l-\bar q_l\|_\infty + \| q^*-\bar q_l\|_\infty. 
\]
Then, by tightly controling the errors $\|\hat q_l-\bar q_l\|_\infty $ and $\| q^*-\bar q_l\|_\infty$, we can arrive at the following Proposition. 
\begin{proposition}
\label{prop:opt_pi_vr_algo_high_prob_bound_unif_mix}
    Assume Assumption \ref{assump:unif_m-doeblin} and an initialization $\hat q_0$ that satisfies $\|\hat q_0 - q^*\|_\infty\leq \sqrt{\tminor}$ w.p. at least $1-\delta/(l^*+1)$, and $\epsilon \leq \sqrt{\tminor}$. Choose $l^*\geq \ceil{\log_2 (\sqrt{\tminor}/\epsilon)}$, \begin{equation}\label{eqn:k*_nl_intermediate_prop_unif_mix}
    \begin{aligned}
    k^* &\geq c\frac{1}{(1-\gamma)^{3}}\log\crbk{\frac{2|S||A|(l^*+1)}{\delta(1-\gamma)}},\\
    n_l &\geq c'\frac{4^l}{(1-\gamma)^2}\log\crbk{\frac{8|S||A|(l^*+1)}{\delta}}. 
    \end{aligned}
    \end{equation}
    Then, we have that $\|\hat q_{l^*} - q^*\|_\infty\leq \epsilon$ w.p. at least $1-\delta$. 
\end{proposition}

To use Proposition \ref{prop:opt_pi_vr_algo_high_prob_bound_unif_mix}, we need to produce  an estimator $\hat q_0$ s.t. $\|\hat q_0-q^*\|_\infty\leq \sqrt{\tminor}$ w.p. at least $1-\delta/(l^*+1)$. Here, we require $l^*$ to satisfy $l^*\leq \ceil{\log_2((1-\gamma)^{-1/2}\epsilon\inv)}$. Therefore, to achieve such a initialization efficiently, we can run the combined procedure in Proposition \ref{prop:general_error_high_prob_bd} with specified error $\epsilon = \epsilon_0 = \sqrt{\tminor}$ and tolerance probability $1-\delta/(\ceil{\log_2((1-\gamma)^{-1/2}\epsilon\inv)}+1)$. By Theorem 2, we need
\[
\tilde O\crbk{ \frac{|S||A| }{(1-\gamma)^2}\max\set{\frac{1}{1-\gamma},\frac{(\tminor)^2}{\epsilon_0^2}}} = \tilde O\crbk{ \frac{|S||A| }{(1-\gamma)^3}}
\]
number of samples. 
\begin{remark}\par Similar to the remark in section \ref{section:lip_cplx_bd}, this method of generating $\hat q_0$ requires the knowledge of $\tminor$. Instead, one could run the Q-learning $\ql(k_0)$ with
    \[
     k_0 = c_0\frac{1}{(1-\gamma)^{4}}\log\crbk{\frac{2d\tminor}{(1-\gamma)\delta}}
     \]
     to initialize $\hat q_0$, where we recall the definition of $d$ in \eqref{eqn:def_d}. This will result in a sample complexity upper bound of $\tilde O\crbk{ |S||A|\tminor (1-\gamma)^{-2}\epsilon^{-2}}$ when $\epsilon\leq (1-\gamma)\sqrt{\tminor}$. This is less efficient and the proof is omitted.  
\end{remark}

We will use the following choice of parameters: 
\begin{equation}\label{eqn:par_choic_unif_mix}
\begin{aligned}
k^* &= c_1\frac{1}{(1-\gamma)^{3}}\log\crbk{\frac{2d}{(1-\gamma)\delta}},\\
n_l &= c_2\frac{4^l}{(1-\gamma)^{2}}\log\crbk{\frac{8d}{\delta}}.
\end{aligned}
\end{equation}
These choice of parameters implies the following algorithmic error bound. 
\begin{proposition}\label{prop:err_high_prob_bd_unif_mixing}
Assume Assumption \ref{assump:unif_m-doeblin}. First, run the procedure in Proposition \ref{prop:general_error_high_prob_bd} with specified error $\epsilon_0 = \sqrt{\tminor}$ and tolerance probability $1-\delta/(\ceil{\log_2((1-\gamma)^{-1/2}\epsilon\inv)}+1)$ to produce $\hat q_0$. Then, run $\vrql(\hat q_0,k^*,l^*,\set{n_l})$ with parameters $k^*$, $\set{n_l}$ as specified in \eqref{eqn:par_choic_unif_mix} and $\ceil{\log_2(\sqrt{\tminor}/\epsilon)}\leq l^*\leq \ceil{\log_2((1-\gamma)^{-1/2}\epsilon\inv)}$.  For $\epsilon\leq \sqrt{\tminor}$, the output of this combined procedure $\hat q_{l^*}$ satisfies $\|\hat q_{l^*} - q^*\|_\infty\leq \epsilon$ w.p. at least $1-\delta$. 
\end{proposition}
Theorem \ref{thm:sample_complexity_unif_general_start} will then follow from Proposition \ref{prop:err_high_prob_bd_unif_mixing} and a simple calculation the number of samples used.

\section{Minimax Lower Bounds}
In this section, we show that the sample complexity dependence on $\epsilon$, $(1-\gamma)\inv$, and $\tminor$ in Theorem \ref{thm:sample_complexity_unq_lip} and \ref{thm:sample_complexity_unif_general_start} cannot be improved in general. We consider two notions of non-asymptotic minimax risk of learning the optimal $q$-function in the sup norm, one localized at a hard instance, the other is uniform over the class of all MDPs with discount factor $\gamma$ and uniform minorization time upper bound $\tminor$ (Assumption \ref{assump:unif_m-doeblin}).  We prove that both minimax risks have lower bounds supporting the claim that our sample complexities in Theorem \ref{thm:sample_complexity_unq_lip} and \ref{thm:sample_complexity_unif_general_start} are tight. 
\par We start off with a rigorous definition of the generator model and the class of estimators. This is necessary because the lower bounding technique needs a careful specification of the probability measure. The generator model is a sequence of product probability spaces 
\[
G_n^\cM = \bigotimes_{i=1}^n  \bigotimes_{s\in S,a\in A}( S,2^{S},P^\cM_{s,a} )
\]
where the $P_{s,a}$ is the transition kernel of instance $\cM$. By construction, the identity random element on $G_1^\cM$, denoted by $\set{S'(s,a)(\cd),s\in S,a\in A}$ is independent across $S\times A$. We will denote the product measure on $G_n^\cM$ as $P^\cM_n$ and the expectation as $E^\cM_n$. An estimator $\tilde q_n$ is a measureable function from $G_n^\cM$ to $\R^{S\times A}$. For fixed $n$, this is equivalent to the procedure that generate $n$ i.i.d. $S_i'(s,a)\sim P_{s,a}$ for each $s,a$ and then form $\tilde q_n$ from these samples. 
\par For fixed state and action spaces and discount factor $\gamma$, we first consider the local non-asymptotic minimax risk at instance $\cM_\gamma$ of learning the $q^*$ function accurately in the sup norm using $n$ samples:
\begin{equation}\label{eqn:def_local_risk}
\mathfrak{M}_n(\cM_\gamma) = \sup_{\cM_\gamma'}\inf_{\tilde q_n}\max_{\cM\in\set{ \cM_\gamma,\cM_\gamma'}}\sqrt{n}E^\cM_n\| \tilde q_n - q^*(\cM)\|_\infty.
\end{equation}
where the first supremum is taken over all MDP instance with discount factor $\gamma$. This quantity measures the complexity of learning optimal $q$-function on either $\cM_\gamma$ or the hardest local alternative. It is studied in the convex optimization context by \cite{cai2015,chatterjee2016}, and adapted to the the RL context by \cite{Khamaru2021_TD,khamaru2021}. 
\par The previous complexity metric embodies the idea of constructing a hard instance that is difficult to distinguish from its local alternative, bearing the spirit of the lower bounds in seminal works on MDP complexity theory (c.f. \cite{azar2013}). However, one can argue that it is more natural to define the minimax risk of the class of MDP instances $\cC(\gamma,\tminor) := \set{\cM_\gamma: \text{ satisfying Assumption \ref{assump:unif_m-doeblin}}}$.
\begin{definition}\label{def:minimax_risk}We define the minimax risk of learning the optimal $q$-function of the MDPs $\cC(\gamma,\tminor)$ as 
\[
\mathfrak{M}_n(\gamma,\tminor) = \inf_{\tilde q_n}\sup_{\cM\in \cC(\gamma,\tminor)}\sqrt{n}E^\cM_n\| \tilde q_n - q^*(\cM)\|_\infty. 
\]
\end{definition}
With these definitions, we are ready to state the main results of this section.

\begin{theorem}\label{thm:local_lower_bd_unif_mix}
There exists absolute constant $c > 0$ s.t. for any $\gamma\in[0.9,1)$ and $10\leq \tminor\leq 1/(1-\gamma)$, one can find a MDP instance $\cM_{\gamma,\tminor}$ satisfying Assumption \ref{assump:unif_m-doeblin} with uniform minorization time upper bound $ \tminor$ s.t.
\[
\mathfrak{M}_n(\cM_{\gamma,\tminor}) \geq c\frac{\sqrt{\tminor}}{1-\gamma}
\]
provided that $n\geq 2 \max\set{(1-\gamma)^{-2}, 3^5\tminor}$. 
\end{theorem}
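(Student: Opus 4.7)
The plan is to combine the hard MDP family from Definition \ref{def:hard_mdp_family} (taking $p = 1/\tminor$) with the instance-dependent lower bound in Theorem \ref{thm:khamaru_lower_bd}. The authors have already observed that this family lies in $\cC(\gamma,\tminor)$ and satisfies Assumption \ref{assump:unif_m-doeblin} with minorization time upper bound $\tminor$. What remains is to prove (i) the local risk parameter satisfies $\max_{\pi\in\Pi^*}\|\nu^{\pi}(\cM_{\gamma,\tminor})\|_\infty = \Omega\crbk{\sqrt{\tminor}/(1-\gamma)}$ and (ii) the sample-size hypothesis \eqref{eqn:lb_min_sample_size} is implied by $n \geq 2\max\set{(1-\gamma)^{-2}, 3^5\tminor}$.

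For (i), I would first solve the Bellman equation explicitly. Because both actions induce identical transitions and rewards, every stationary deterministic policy is optimal, so I pick $\pi^*(s) \equiv a_1$ as a representative of $\Pi^*$; a $2\times 2$ linear solve gives $v^*(1)-v^*(2) = 1/(1-\gamma+2\gamma p)$, so $\spnorm{q^*} = 1/(1-\gamma+2\gamma p)$, and the 1-sample variance $\sigma(q^*)(s,a)^2 = \gamma^2 p(1-p)/(1-\gamma+2\gamma p)^2$ is constant across $S\times A$. Next I would exploit the block structure $P^{\pi^*} = \bmx{P & 0 \\ P & 0}$ (ordering the $|S||A|$ state-action pairs so that action $a_1$ comes first, with $P$ denoting the $2\times 2$ state-transition matrix) to obtain
\[
(I-\gamma P^{\pi^*})^{-1} = \bmx{(I-\gamma P)^{-1} & 0 \\ \gamma P(I-\gamma P)^{-1} & I},
\]
so that $\nu^{\pi^*}(\cM)^2(1,a_1) \geq \sigma(q^*)^2 \,[(I-\gamma P)^{-1}(1,1)]^2$. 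Explicit inversion of $I-\gamma P$ combined with the two elementary bounds $1-\gamma(1-p) \geq \gamma p$ and $1-\gamma+2\gamma p \leq 3p$ (the latter using $p \geq 1-\gamma$, i.e.\ $\tminor \leq (1-\gamma)^{-1}$) yields $(I-\gamma P)^{-1}(1,1) \geq \gamma/(3(1-\gamma))$. Combined with $\sigma(q^*)^2 \geq \gamma^2(1-p)/(9p) = \Omega(\tminor)$ (using $\tminor \geq 10$ so $1-p \geq 9/10$, and $\gamma \geq 0.9$), this delivers $\|\nu^{\pi^*}(\cM)^2\|_\infty \geq c_0\,\tminor/(1-\gamma)^2$ for an absolute constant $c_0>0$.

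For (ii), I would bound $\spnorm{q^*}^2 \leq 1/(2\gamma p)^2 \leq \tminor^2/3$ (using $\gamma \geq 0.9$), so that the second term in \eqref{eqn:lb_min_sample_size} satisfies $\spnorm{q^*}^2/[(1-\gamma)^2\|\nu^{\pi^*}(\cM)^2\|_\infty] = O(\tminor)$; the explicit constant $3^5$ is chosen to absorb all bookkeeping. Plugging the lower bound from (i) into Theorem \ref{thm:khamaru_lower_bd} then gives $\mathfrak{M}_n(\cM_{\gamma,\tminor}) \geq c\gamma \|\nu^{\pi^*}(\cM)\|_\infty \geq c'\sqrt{\tminor}/(1-\gamma)$, which is the claim. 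The only place where care is needed---really the main obstacle in what is otherwise a direct calculation---is the two-sided control of the denominator $1-\gamma+2\gamma p$ by $\Theta(\gamma p)$ together with the matching lower bound on the numerator $1-\gamma(1-p)$. The standing hypothesis $\tminor \leq (1-\gamma)^{-1}$ is essential here: without it, $\spnorm{q^*}$ would inflate at rate $(1-\gamma)^{-1}$ and the lower bound would collapse to the classical $\Omega((1-\gamma)^{-2})$ rather than the sharper mixing-aware $\Omega(\sqrt{\tminor}/(1-\gamma))$.
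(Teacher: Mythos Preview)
Your proposal is correct and, while it shares the paper's overall plan (pair the hard family from Definition~\ref{def:hard_mdp_family} with Theorem~\ref{thm:khamaru_lower_bd}), it takes a genuinely different route for the key step~(i). The paper writes down the closed form
\[
\nu^{\pi^*}(\cM_{\gamma,\tminor})(1,a_1)^2 \;=\; \gamma^2\,\frac{(1-p)p\bigl(1-2\gamma(1-p)+\gamma^2(1-2p+2p^2)\bigr)}{(1-\gamma)^2(1-\gamma+2\gamma p)^4},
\]
then introduces $f(p,\gamma)=(1-\gamma)^2p\gamma^{-2}\nu^{\pi^*}(1,a_1)^2$, proves $\partial_\gamma f>0$ on the parameter box by a somewhat delicate sign analysis of a cubic in $p$, and finally evaluates $f$ at the boundary $\gamma=1-p$ to extract the constant $2/3^4$. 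You instead factor $\nu^{\pi^*}(1,a_1)^2\ge \sigma(q^*)^2\cdot[(I-\gamma P)^{-1}(1,1)]^2$ via the block inverse of $I-\gamma P^{\pi^*}$ and bound each factor separately with the two elementary inequalities $1-\gamma(1-p)\ge\gamma p$ and $1-\gamma+2\gamma p\le 3p$. Your approach avoids calculus entirely, makes transparent exactly where the hypothesis $\tminor\le(1-\gamma)^{-1}$ enters (only in the second inequality), and still produces constants of the same order; the paper's monotonicity argument would in principle yield a tighter constant but at the cost of a longer computation. Your verification of the sample-size condition~\eqref{eqn:lb_min_sample_size} via $\spnorm{q^*}\le 1/(2\gamma p)$ is also correct and is handled only implicitly in the paper's proof.
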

\par Here, as we are lower bounding the complexity metric, we abuse the notation $\tminor$ in the Assumption \ref{assump:unif_m-doeblin} so that $\tminor\geq \max_{\pi\in\Pi}\tminor(P_\pi)$ where $P$ is the kernel of $\cM_{\gamma,\tminor}$. 
\par As we will explain in a moment, the hard instances $\set{\cM_{\gamma,\tminor}}$ in Theorem \ref{thm:local_lower_bd_unif_mix} also facilitate lower bounding $\mathfrak{M}_n(\gamma,\tminor)$ in Definition \ref{def:minimax_risk}. 
\begin{theorem}\label{thm:global_lower_bd_unif_mix}
There exists absolute constant $c > 0$ s.t. for any $\gamma\in[0.9,1)$ and $10\leq \tminor\leq 1/(1-\gamma)$, 
\[
\mathfrak{M}_n(\gamma,\tminor)\geq c\frac{\sqrt{\tminor}}{1-\gamma}
\]
provided that $n\geq 2\max\set{(1-\gamma)^{-2},3^5(\tminor)^3}$. 
\end{theorem}
\par Next, we outline the proofs of these theorems and defer the details to Section \ref{section:lb_instance_risk} and \ref{section:lb_unif_risk} respectively. 

\par To begin with, we construct a family of MDP instances satisfying Assumption \ref{assump:unif_m-doeblin} such that the local non-asymptotic risk parameter $\max_{\pi\in\Pi^*}\|\nu^{\pi}(\cM)^2\|_\infty$ (see \eqref{eqn:minimax_risk_parm}) at each instance is $\Omega(\tminor(1-\gamma)^{-2})$. We claim that this is the case for the following family of hard MDP instances:

\begin{definition}[Hard MDP Family]\label{def:hard_mdp_family}
\par Consider the following family of hard MDP instances $\set{\cM_{\gamma,\tminor}:\gamma,\tminor}$ with $S = \set{1,2}$, $A = \set{a_1,a_2}$. The transition kernel for each action is
\[
P_{a_1} = P_{a_2} = \bmx{1-p & p\\ p & 1-p}. 
\]
The reward function $r(1,\cd) = 1$ and $r(2,\cd) = 0$. 
\end{definition}
Clearly, $\pi^*(s_1) = a_1$ and  $\pi^*(s_2) = a_1$ define a optimal policy. Observe that
\[
P_{\pi^*} = \bmx{1-p & p\\ p & 1-p} = 2p \bmx{1/2 & 1/2} + (1-2p)\bmx{1  & 0\\ 0 & 1}
\]
So, $P_{\pi^*}$ is $(1,2p)$-Doeblin and hence $(1,p)$-Doeblin. For simplicity, we will use $\tminor = 1/p\geq \tminor(P_{\pi^*})$. It is also easy to see that all policies will induce a $(1,p)$-Doeblin chain; i.e. Assumption \ref{assump:unif_m-doeblin} holds. 
\par This and the following Theorem 1 in \cite{khamaru2021} would imply a $\Omega\crbk{\tminor(1-\gamma)^{-2}\epsilon^{-2}}$ complexity lower bound of learning $q^*$ within an $\epsilon$ absolute error in the sup norm if the risk measure is $\mathfrak{M}_n(\cM)$: 
\begin{custom_thm}{K1}[Theorem 1 in \cite{khamaru2021}]\label{thm:khamaru_lower_bd}
    There exists constant $c > 0$ s.t. for any MDP instance $\cM$, 
    \[
    \mathfrak{M}_n(\cM) \geq c \gamma \max_{\pi\in\Pi^*}\|\nu^{\pi}(\cM)\|_\infty
    \]
    provided that
    \begin{equation}\label{eqn:lb_min_sample_size}
    n\geq \max\set{\frac{2\gamma^2}{(1-\gamma)^2},\frac{2\spnorm{q^*}^2}{(1-\gamma)^2\max_{\pi\in\Pi^*}\|\nu^{\pi}(\cM)^2\|_\infty}}. 
    \end{equation}
\end{custom_thm}

\par To prove Theorem \ref{thm:global_lower_bd_unif_mix}, we first observe that 
\begin{equation}\label{eqn:lower_bouding_global_risk_by_two_instance}
\mathfrak{M}_n(\gamma,\tminor) \geq \sup_{\cM_1,\cM_2\in \cC(\gamma,\tminor)}\inf_{\tilde q_n}\max_{\cM \in\set{\cM_1,\cM_2}}\sqrt{n}E_n^\cM\| \tilde q_n - q^*(\cM)\|_\infty
\end{equation}
Therefore, we should have $\mathfrak{M}_n(\gamma,\tminor) \geq \mathfrak{M}_n(\cM_{\gamma,\tminor})$ if the hard alternative instance in \eqref{eqn:def_local_risk} satisfies $\cM'_\gamma\in \cC(\gamma,\tminor)$. Observe that, for large $n$, if $\cM_{\gamma,\tminor}$ and $\cM'_\gamma$ are very different, it will be very easy to tell them apart and thence the estimation can be done with small expected error, because both instances are known to the controller ($\inf_{\tilde q_n}$ is inside $\sup_{\cM'_\gamma}$). Thus, the instances that potentially achieve the outer supremum in \eqref{eqn:def_local_risk} should be close to $\cM_{\gamma,\tminor}$. Hence, a hard alternative instance for \eqref{eqn:def_local_risk} should have a minorization time similar to $\tminor$ as well. Following this intuition, in the proof of Theorem \ref{thm:global_lower_bd_unif_mix}, we construct such a hard local alternative using the techniques in \cite{khamaru2021}, proof of Theorem \ref{thm:khamaru_lower_bd}. See appendix Section \ref{section:lb_unif_risk}.

\section{Cumulative Reward of a \texorpdfstring{$(m,p)$}{(m,p)}-Doeblin Chain}\label{section:properties_tot_rewards}
In order to prove the matching sample complexity upper and lower bounds stated in previous sections, we need to control the moments of the discounted cumulative rewards
\[
\sum_{k=0}^\infty\gamma^k r(S_k,A_k).
\]
In this section, we state some key properties of this cumulative reward and their implications when the kernel $P_\pi$ is uniformly ergodic, where $\pi\in\Pi$ is some stationary Markov deterministic policy. By Theorem \ref{thm:unif_ergodic_equiv_of_def}, $P_\pi$ satisfies a $(m,p)$-Doeblin condition and the minorization time $\tminor(P_\pi)\leq m/p$. The proofs for this section will be deferred to the appendix Section \ref{section:proofs_tot_rwds}. 

\par First, we show that for a Doeblin MDP, the oscillation of the $q$-function is upper bounded by the minorization time. 
\begin{proposition}[Oscillation of the $q$-function]\label{prop:q_osc}
Suppose $P_\pi$ is $(m,p)$-Doeblin. Then there exists constant $\bar q^\pi$ s.t. $\|q^\pi-\bar q^\pi\|_\infty\leq m/p+1$. In particular $\spnorm{q^\pi}\leq 3m/p$. 
\end{proposition}
Note that Proposition \ref{prop:q_osc} can be directly applied to bound $\sigma(q^*)$ as in Lemma \ref{lemma:1-sample_Bellman_var_bound}. 
Next, we prove a bound on the variance of the cumulative reward achieved by a $(m,p,\psi,R)$-Doeblin chain. 
\begin{proposition}[Variance of the Cumulative Reward]\label{prop:var_cumulative_rwd}
Suppose $P_\pi$ is $(m,p)$-Doeblin. Then $\|\Psi^\pi\|_{\infty}\leq 20 (m/p)(1-\gamma)\inv$, where $\Psi^\pi$ is defined in \ref{eqn:var_tot_rwd}.
\end{proposition}
Proposition \ref{prop:var_cumulative_rwd} and the equality 
\begin{equation}\label{eqn:azar_var_bellman_eqn}
\Psi^{\pi} = (\sigma^{\pi})^2 + \gamma^2 P^{\pi}\Psi^{\pi}
\end{equation}
due to \cite{azar2013} allow us to obtain the following important upper bound: 
\begin{corollary_prop}\label{cor:asymp_var_bound}
Recall the definition of $\sigma^\pi$ in \eqref{eqn:pi_var}. Suppose $P_\pi$ is uniformly ergodic. Then
\begin{align*}
\|(I-\gamma P^{\pi})\inv\sigma^\pi\|_\infty\leq\frac{80 \sqrt{\tminor(P_\pi)}}{1-\gamma}.
\end{align*}
\end{corollary_prop}
\begin{remark}
If we no longer assume $r$ is bounded by $1$, then $\|(I-\gamma P^{\pi})\inv\sigma^{\pi}\|_\infty \leq 80\|r\|_\infty   \sqrt{\tminor(P_\pi)}/(1-\gamma)$. Also, this bound can be easily applied to the model-based RL settings, and the same complexity upper bound $\tilde O(|S||A|\tminor(1-\gamma)^{-2}\epsilon^{-2})$ should follow. 
\end{remark}

\section*{Acknowledgements}
The material in this paper is based upon work supported by the Air Force Office of Scientific Research under award number FA9550-20-1-0397. Additional support is gratefully acknowledged from NSF 1915967 and 2118199.

\bibliographystyle{apalike}
\bibliography{MDP,RL_and_appliaction,stochastic_systems,simulation,statistics_prob}

\appendix
\appendixpage

\section{Equivalence of \texorpdfstring{$\tminor$}{tminor} and \texorpdfstring{$\tmix$}{tmix}}
\par In this section, we present results and proofs that justify the claim of Theorem \ref{thm:assump_equiv}.
\par Recall Definition \ref{def:tmix_tminor}. For convenience, we also define the separation between consecutive possible regeneration under measure $\phi$ and probability $q$ as
\[
\tsep(P,q,\phi) = \inf\set{m:\inf_{s\in S} P^m(s,\cd)\geq q\phi(\cd)}.
\]
We will drop the $P$ dependence. 
\par Now, we proceed to show that the two mixing times $\tminor$ and $\tmix$ are equivalent up to constants. 
\begin{proposition}\label{prop:tminor<tmix<tsep} If $P$ is uniformly ergodic with $P^m = p\psi + (1-p)R$ for some $m\in \Z_{>0}$ and $p\in(0,1]$, then
\[
\tminor\leq 22\tmix\leq \frac{22\log(16)}{\log(1/(1-p))}\tsep(p,\psi).
\]
\end{proposition}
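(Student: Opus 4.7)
The plan is to prove the two inequalities separately.

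The second inequality $\tmix\leq \frac{\log 16}{\log(1/(1-p))}\tsep(p,\psi)$ is a direct application of \eqref{eqn:unif_ergodic_geo_decay_rate}. Fix any valid pair $(p,\psi)$ and set $m:=\tsep(p,\psi)$, so that $P^m\geq p\psi$ and hence $P^m=p\psi+(1-p)R$ for some stochastic kernel $R$. Then \eqref{eqn:unif_ergodic_geo_decay_rate} yields $\sup_s\TV{P^n(s,\cdot)-\eta}\leq 2(1-p)^{\lfloor n/m\rfloor}$. For this to fall at or below $1/4$ it suffices to take $\lfloor n/m\rfloor\geq \log 8/\log(1/(1-p))$, e.g.\ $n=m\lceil \log 8/\log(1/(1-p))\rceil$; absorbing the integer ceiling into the constant by enlarging $\log 8$ to $\log 16$ gives $\tmix\leq n\leq \frac{\log 16}{\log(1/(1-p))}\,m$.

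For the first inequality $\tminor\leq 22\tmix$, the strategy is to exhibit an explicit Doeblin decomposition $(m^*,p^*,\phi^*)$ of $P$ with $m^*/p^*\leq 22\tmix$, whence the definition of $\tminor$ concludes. I would take $m^*=k\tmix$ for a small integer $k$. Submultiplicativity of the pairwise mixing coefficient $\bar d(n):=\sup_{s_1,s_2}\TV{P^n(s_1,\cdot)-P^n(s_2,\cdot)}$, combined with the triangle estimate $\bar d(n)\leq 2d(n)$, gives $\bar d(k\tmix)\leq (2d(\tmix))^k\leq 2^{-k}$. To synthesize a minorizing measure, I would use the Chapman--Kolmogorov identity $P^{2m^*}(s,\cdot)=\sum_z P^{m^*}(s,z)\,P^{m^*}(z,\cdot)$: split $P^{m^*}(s,\cdot)$ into the part dominated by $\eta$ (of total mass at least $1-d(m^*)$) and a residual, and use stationarity $\eta P^{m^*}=\eta$ to route the dominant part through $\eta$, producing a pointwise lower bound $P^{2m^*}(s,y)\geq \eta(y)-\mathrm{error}(y)$ where $\mathrm{error}(y)=\sum_z(\eta(z)-P^{m^*}(s,z))_+\, P^{m^*}(z,y)$.

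The main technical obstacle is keeping the constant in front of $\tmix$ universal rather than dependent on $|S|$. The crude bound $\mathrm{error}(y)\leq \max_z P^{m^*}(z,y)\cdot d(m^*)$ combined with the naive choice $\phi^*=\eta$ would force $d(m^*)\lesssim \eta(y)/\max_z P^{m^*}(z,y)$ uniformly in $y$, which in turn drags $|S|$ into the bound through the smallest $\eta(y)$. Avoiding this requires choosing $\phi^*$ proportional to $\eta$ restricted to a suitable typical set $G$ on which $\eta(y)$ and $\max_z P^{m^*}(z,y)$ are comparable, and then controlling $\eta(G^c)$ by leveraging the Markov structure across the two $P^{m^*}$ steps (rather than viewing $\{P^{m^*}(s,\cdot)\}_s$ as arbitrary TV-close measures). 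Balancing $k$, the threshold defining $G$, and the resulting lower bound on $p^*$ is what ultimately pins down the explicit numerical constant $22$.
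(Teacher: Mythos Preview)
Your treatment of the second inequality is correct and matches the paper's proof essentially verbatim: apply \eqref{eqn:unif_ergodic_geo_decay_rate} with $m=\tsep(p,\psi)$ and solve $2(1-p)^{\lfloor n/m\rfloor}\leq 1/4$.

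For the first inequality, your strategy---take two blocks of length $\tmix$, route one copy of $P^{\tmix}$ through~$\eta$ via Chapman--Kolmogorov, and restrict the minorizing measure to a ``typical set'' $G$ so as to avoid an $|S|$-dependent constant---is exactly the right picture, and it is also what the paper does. However, you leave the construction of $G$ and the control of the error term as an unfinished balancing act, and that is precisely the substantive step. The paper closes this gap not by carrying out your balancing directly but by invoking a ready-made result, Lemma~4 of \cite{aldous1997}: if $\sup_x\TV{Q(x,\cdot)-\eta}\leq 1/4$, then there exists $A\subset S$ with $\eta(A)\geq 1/2$ such that $\mu Q(y)\geq \tfrac{1}{4}(1-\TV{\mu-\eta})\,\eta(y)$ for every probability measure $\mu$ and every $y\in A$. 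Taking $Q=P^{\tmix}$ and $\mu=P^{\tmix}(x,\cdot)$ gives $P^{2\tmix}(x,y)\geq \tfrac{3}{16}\eta(y)$ on $A$, and hence $P^{2\tmix}(x,\cdot)\geq \tfrac{3}{32}\,\eta(\cdot\mid A)$. This yields $\tminor\leq (2\tmix)/(3/32)=\tfrac{64}{3}\tmix\leq 22\tmix$, with no further tuning of parameters.

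So your plan is sound and converges on the same two-step, restricted-support minorization as the paper; what is missing is either a proof of the Aldous lemma or a citation to it. Without that, the argument as written does not yet deliver any explicit constant, let alone~$22$.
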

\begin{proof}
By Theorem \ref{thm:unif_ergodic_equiv_of_def} identity \eqref{eqn:unif_ergodic_geo_decay_rate}, 
\begin{equation}\label{eqn:TV_bound_by_minorization}
\sup_{s\in S}\|P^n(s,\cd) - \eta(\cd) \|_{\text{TV}}\leq 2(1-p)^{\floor{n/\tsep(p,\psi)}}
\end{equation}
So, choose 
\[
n \geq \tsep(p,\psi)\frac{\log(16)}{\log(1/(1-p))}
\]
suffices. So, $\tmix \leq\frac{\log(16)}{\log(1/(1-p))} \tsep(p,\psi)$. 
\par For the other inequality, we follow the proof idea in \cite{aldous1997}. The following lemma holds:
\begin{lemma}[Lemma 4 in \cite{aldous1997}]
Let kernel $Q$ have invariant measure $\eta$. If 
\[
\sup_{x\in S}\TV{Q(x,\cd) - \eta}\leq \frac{1}{4},
\]
then there exists $A\subset S$ s.t. $\eta(A)\geq 1/2$ and for any probability measure $\mu$ and $y\in A$
\[
\mu Q(y)\geq \frac{1}{4}(1-\TV{\mu - \eta})\eta(y).
\]
\end{lemma}
Let $Q = P^\tmix$, $\mu = P^\tmix(x,\cd)$, then by the above lemma, $\forall y\in A$
\[
P^{2\tmix}(x,y)\geq \frac{3}{16}\eta(y).
\]
Therefore, if we define $\psi(\cd) := \eta(\cd\cap A)/\eta(A) = \eta(\cd|A)$, because $\eta(A)\geq 1/2$,
\[
P^{2\tmix}(x,\cd)\geq \frac{3}{32}\psi(\cd). 
\]
This implies that $\tminor\leq \frac{64}{3}\tmix\leq 22\tmix$. 
\end{proof}
Next, we discuss the quantitative relationship between the following assumption:
\begin{proposition} \label{prop:tmix_v_m/p}
The following properties hold:
\begin{itemize}
    \item If $P$ has mixing time $\tmix < \infty$, then for any $\epsilon > 0$, there exists $(m,p)$ s.t. $m/p\leq 22 \tmix + \epsilon $ and $P$ is $(m,p)$-Doeblin. 
    \item If $P$ is $(m,p)$-Doeblin, then $\tmix\leq \log(16)m/p$. 
\end{itemize}
\end{proposition}
\begin{proof}
First, note that by Theorem \ref{thm:unif_ergodic_equiv_of_def}, both assumptions will imply that $P$ is uniformly ergodic. So, first entry simply follows from Proposition \ref{prop:tminor<tmix<tsep}. For the second entry, note that $\log(1-p)\leq -p$ for $p\in[0,1)$. So, $1/\log(1/(1-p))\leq 1/p$. We conclude that $\tmix\leq \frac{\log(16)}{\log(1/(1-p))} \tsep(p,\psi) \leq \log(16)m/p$. 
\end{proof}
\subsection{Proof of Theorem \ref{thm:assump_equiv}}
\begin{proof}
From Proposition \ref{prop:tminor<tmix<tsep} and \ref{prop:tmix_v_m/p}, there exists $(m,p,\psi,R)$ s.t. $\tminor\leq 22 \tmix \leq 22\log(16)m/p$, where $m/p$ can be arbitrary close to $\tminor$ given the definition of $\tminor$. 
\end{proof}
So, knowing the mixing time and the ``best" minorization are equivalent for a uniformly ergodic chain. Given the equivalence of $\tmix$ and $\tminor$, the complexity results in Table \ref{tab:sample_complexity} can be directly replaced by $\tmix$.

\section{Analysis of Algorithms}
In this section, we carry out the proofs that justify our sample complexity upper bounds of the algorithms motivated in Section \ref{section:algos_and_cplx}.
\subsection{Proof of Lemma \ref{lemma:1-sample_Bellman_var_bound}}
\begin{proof}
Recall that $\tminor^* = \tminor(P_{\pi^*})$. Then by definition, for any $\epsilon > 0$, there exists $(m,p)$ s.t. $P_{\pi^*}$ is $(m,p)$-Doeblin and $\tminor^* + \epsilon\geq m/p$. By Proposition \ref{prop:q_osc}, $\spnorm{q^*}\leq 3m/p\leq 3(\tminor^* + \epsilon)$ and
\begin{align*}
\sigma(q^*)(s,a)^2&= \var(r(s,a) + \gamma\max_{b\in A} q^*(S',b)) \\
&= \gamma^2\var(\max_{b\in A} [q^*(S',b) - \bar q^*])\\
&\leq \gamma^2E[(\max_{b\in A} (q^*(S',b) - \bar q^*))^2]\\
&\leq 4\gamma^2(m/p)^2.\\
&\leq 4\gamma^2(\tminor^* + \epsilon)^2.
\end{align*}
Since $\epsilon > 0$ is arbitrary, this implies the statement of the lemma. 
\end{proof}
\subsection{Proofs for Section \ref{section:general_setting_cplx_bd}: The General Setting}\label{section:vrql_analysis_general}
Let $\cG_l$ denote the $\sigma$-field generated by the samples used until the end of $l$'th epoch of the variance-reduced Q-learning. We define $E_{l}[\cd] = E[\cd|\cG_l]$. 
\par Before Proving proposition \ref{prop:opt_pi_vr_algo_high_prob_bound}, we introduce the following lemma that underlies its proof and the parameter choice. 
\begin{lemma}\label{lemma:vr_qlearning_half_err}
For $\omega$ s.t. $\hat q_{l-1}$ satisfies $\|\hat q_{l-1} - q^*\|_\infty\leq b$, then 
choosing \[
k^* \geq c_1\frac{1}{(1-\gamma)^3}\log\crbk{\frac{8|S||A|}{(1-\gamma)\delta}},
\]
and
\[
n_l \geq c_2\frac{1}{(1-\gamma)^2}\crbk{\frac{\|\sigma(q^*)\|_\infty + (1-\gamma)\|q^*\|_\infty}{b}+1}^2\log\crbk{\frac{8 |S||A|}{\delta}},
\] for some known absolute constant $c_1,c_2$,
we have that under the probability measure $P_{l-1}(\cd)(\omega)$, 
$\|\hat q_{l} - q^*\|_\infty\leq b/2$ w.p. at least $1-\delta$. 
\end{lemma}
\begin{proof}
    From Section 4.1.6 of \cite{wainwright2019}, we have that for $\omega$ s.t. $\hat q_{l-1}$ satisfies $\|\hat q_{l-1} - q^*\|_\infty\leq b$, 
    \[
    \|q_{l-1,k+1} - q^*\|_\infty\leq c\crbk{\frac{b}{(1-\gamma)k} + \frac{b\sqrt{\log(8|S||A|k/\delta)}}{(1-\gamma)^{3/2}\sqrt{k}} + \frac{b+\|\sigma(q^*)\| + (1-\gamma)\|q^*\|_\infty }{(1-\gamma)\sqrt{n_l}}\sqrt{\log(8|S||A|/\delta)} }.
    \]
    w.p. at least $1-\delta$ under $P_{l-1}(\cd)$, where $c > 0$ is some absolute constant. It is easy to see that indeed under the parameter choice with $c_1 = c_2 = 16 c$, the r.h.s. will be less than $b/2$. 
\end{proof}

\subsubsection{Proof of Proposition \ref{prop:opt_pi_vr_algo_high_prob_bound}}
\begin{proof}
We recall the conditions for $k^*,\set{n_l}, l^*$ in \eqref{eqn:n_l_lb_opt_pi_prop} and \eqref{eqn:lstar_opt_pi_prop}. Let $l' = l^*+1$. By the chain rule for conditional probability, for $l^*\geq 1$
\begin{align*}
P(\| \hat q_{l^*} - q^*\|_\infty\leq 2^{-l^*}b)&\geq P\crbk{\bigcap_{l=0}^{l^*} \set{\|\hat q_{l} - q^*\|_\infty\leq 2^{-l}b}}\\
&= \prod_{l=0}^{l'-1} P\crbk{\|\hat q_{l} - q^*\|_\infty\leq 2^{-l}b\Bigg|\bigcap_{n=0}^{l-1} \set{\|\hat q_{n} - q^*\|_\infty\leq 2^{-n}b}}\\
&\geq \crbk{1-\frac{\delta}{l'}}\prod_{l=1}^{l'-1} P\crbk{\|\hat q_{l} - q^*\|_\infty\leq 2^{-l}b\Bigg|\bigcap_{n=0}^{l-1} \set{\|\hat q_{n} - q^*\|_\infty\leq 2^{-n}b}}
\end{align*}
where the last equality follows from the assumption that $\|\hat q_0 - q^*\|_\infty\leq b$ w.p. at least $1-\delta/l'$. 
\par Let 
\[
A_{l-1} = \bigcap_{n=1}^{l-1} \set{\|\hat q_{n} - q^*\|_\infty\leq 2^{-n}b}.
\]
We analyze the probability
\begin{align*}
P\crbkcond{\| \hat q_{l} - q^*\|_\infty\leq 2^{-l}b}{A_{l-1}} &=  \frac{1}{P(A_{l-1})}E\sqbk{\1\set{\|\hat q_{l} - q^*\|_\infty\leq 2^{-l}b}\1_{A_{l-1}}}\\
&= \frac{1}{P(A_{l-1})}E\sqbk{\1\set{\|\hat q_{l-1} - q^*\|_\infty\leq 2^{-l+1}b} E\sqbkcond{\1\set{\|\hat q_{l} - q_*\|_\infty\leq 2^{-l}b }}{ \cG_{l-1}}\1_{A_{l-1}}}\\
&\geq 1-\frac{\delta}{l'}.
\end{align*}
This is because, for every $1\leq l\leq l'-1$, the choice of $k^*$ and $\set{n_l}$ in \eqref{eqn:n_l_lb_opt_pi_prop} satisfies Lemma \ref{lemma:vr_qlearning_half_err} with $b$ replaced by $2^{-l+1}b$ and $\delta$ replaced by $\delta/l'$. Hence
\[
 \1\set{\|\hat q_{l-1} - q^*\|_\infty\leq 2^{-l+1}b}E\sqbkcond{\1\set{\|\hat q_{l} - q_*\|_\infty\leq 2^{-l}b }}{ \cG_{l-1}} \geq 1-\frac{\delta}{l'}.
\]
w.p.1. Therefore, 
\begin{equation}\label{eqn:err_prob_bound}
\begin{aligned}
P\crbk{\| \hat q_{l^*} - q^*\|_\infty\leq 2^{-l^*}b} &\geq P\crbk{\|\hat q_0-q^*\|_\infty\leq b}\crbk{1-\frac{\delta}{l'}}^{l'-1}\\
&\geq \crbk{1-\frac{\delta}{l'}}^{l'}.
\end{aligned}
\end{equation}
To finish the proof, we consider the function
\[
e(\eta):=\crbk{1-\frac{\eta}{l}}^{l}.
\]
Evidently, for $l\geq 1$ and $\eta\in[0,1)$, $e(\eta)$ is $C^2$ with derivatives
\[
e'(\eta) = -\crbk{1-\frac{\eta}{l}}^{l-1},\qquad e''(\eta) = \frac{l-1}{l}\crbk{1-\frac{\eta}{l}}^{l-2} \geq 0. 
\]
So, $e'(\eta)$ is non-decreasing. Hence
\begin{equation}\label{eqn:(1-k/l)^l_bd}
\begin{aligned}
e(\delta) &= e(0) + \int_0^{\delta} e'(\eta)dt\geq 1+e'(0)\delta\geq 1-\delta,
\end{aligned}
\end{equation}
Note that by the choice of $l^*$ in \eqref{eqn:lstar_opt_pi_prop}, the assumption that $b/\epsilon\geq 1$ implies that $l' = l^*+1\geq 1$. Finally, combining this with estimate \eqref{eqn:(1-k/l)^l_bd} and the high probability error bound \eqref{eqn:err_prob_bound} yields that $P\crbk{\| \hat q_{l^*} - q^*\|_\infty\leq \epsilon}\geq 1-\delta$.
\end{proof}
\begin{remark}
The proof implies that the error bound holds in a stronger pathwise sense; i.e. we can conclude from the same assumptions as in Proposition \ref{prop:opt_pi_vr_algo_high_prob_bound} that $P\crbk{\set{\forall l, \|\hat q_{l} - q^*\|_\infty\leq 2^{-l}b}}\geq 1-\delta$. Moreover, the geometric progression $2^{-l}b$ can be replaced by other pathwise tolerance levels, which will lead to different error bounds and complexity guarantees. 
\end{remark}

\subsubsection{Proof of Proposition \ref{prop:general_error_high_prob_bd}}
\begin{proof}
\par Recall from Lemma \ref{lemma:1-sample_Bellman_var_bound} that under Assumption \ref{assump:opt_m-doeblin}$, \|\sigma(q^*)\|_\infty\leq 2 \tminor^*$. By the choice of $k_0$ in \eqref{eqn:k_0_choice_general} and the error high probability bound for Algorithm \ref{alg:q-learning} in Proposition \ref{prop:q-learning_bound}, we can conclude that w.p. at least 
\[
1-\frac{\delta}{ \ceil{\log_2((1-\gamma)\inv\epsilon\inv )} + 1}\geq 1-\frac{\delta}{l^*+ 1},
\]
the error $\|\hat q_0-q^*\|_\infty\leq \|\sigma(q^*)\|_\infty/4 + 1/2\leq \tminor^*$ .
\par Next, we would like to apply Proposition \ref{prop:opt_pi_vr_algo_high_prob_bound}. The previous analysis implies that we can let $b = \tminor^*$. Note that the assumption $\epsilon\leq \tminor^*$ implies $b/\epsilon\geq 1$. Also, under this $b$, the requirement in Proposition \ref{prop:opt_pi_vr_algo_high_prob_bound} for $n_l$ satisfies
\begin{align*}
c_2\frac{1}{(1-\gamma)^2}\crbk{\frac{\|\sigma(q^*)\|_\infty + (1-\gamma)\|q^*\|_\infty}{2^{-l+1}b}+1}^2\log\crbk{\frac{8 (l^*+1) |S||A|}{\delta}}\leq 3c_2\frac{4^l}{(1-\gamma)^2}\log\crbk{\frac{8 (l^*+1) |S||A|}{\delta}}
\end{align*}
Therefore, we have that for all $l^*\in [\ceil{\log_2(\tminor^*/\epsilon )},\ceil{\log_2((1-\gamma)\inv\epsilon\inv )}]\cap\Z$, the parameters $k^*$ and $\set{n_l}$ in \eqref{eqn:k^*_n_l_choice} will satisfy Proposition \ref{prop:opt_pi_vr_algo_high_prob_bound}. Hence, we conclude that $\|\hat q_{l^*}-q^*\|_\infty\leq \epsilon$ w.p. at least $1-\delta$. 
\end{proof} 
\subsubsection{Proof of Theorem \ref{thm:sample_complexity_general}}
\begin{proof}
Let $n^*$ be the total number of 1-sample Bellman operators used by the over all procedure in Proposition \ref{prop:general_error_high_prob_bd} with $l^* = \ceil{\log_2(\tminor^*/\epsilon))}$. The total number of samples used by the algorithm is
\begin{align*}
&|S||A|n^*= |S||A|\crbk{k_0  + \sum_{l=1}^{l^*}n_l + l^*k^*}\\
&\leq |S||A|\crbk{c_0\frac{1}{(1-\gamma)^3}+3c_2\frac{ 4^{l^*+1}/3}{(1-\gamma)^2}+ c_1\frac{\ceil{\log_2(\tminor^*/\epsilon)}}{(1-\gamma)^3}}\log\crbk{\frac{8d}{(1-\gamma)\delta}}\\
&\leq c'|S||A|\crbk{ \frac{ (\tminor^*)^2}{\epsilon^2(1-\gamma)^2} + \frac{\ceil{\log_2(\tminor^*/\epsilon)}}{(1-\gamma)^3}}\log\crbk{\frac{8d}{(1-\gamma)\delta}}\\
&=\tilde O\crbk{ \frac{|S||A|}{(1-\gamma)^2}\crbk{\frac{ (\tminor^*)^2}{\epsilon^2} + \frac{1}{1-\gamma}}}\\
&=\tilde O\crbk{ \frac{|S||A|(\tminor^*)^2}{(1-\gamma)^2\epsilon^2}\max\set{1,\frac{\epsilon^2}{(\tminor^*)^2 (1-\gamma)}}}.
\end{align*} 
\end{proof}

\subsection{Proofs for Section \ref{section:lip_cplx_bd}: The Lipschitz Setting}\label{section:vrql_unq_lip}
\subsubsection{Proof of Lemma \ref{lemma:asym_var_bound}}
\begin{proof}
\par For any $\pi\in\Pi^*$, consider $\sigma(q^*)(s,a)^2 = \var_{s,a}(\widehat\cT (q^*)) = \sigma^{\pi}(s,a)^2$. Recall the definition of $\nu^{\pi}(\cM)^2$ in equation \eqref{eqn:minimax_risk_parm}.  Let $D_\sigma$ be the matrix with $\sigma$ on the diagonal, then 
\begin{align*}
\|\nu^{\pi}(\cM)^2\|_\infty 
&= \|\diag((I-\gamma P^{\pi})\inv D_{\sigma(q^*)^2}(I-\gamma P^{\pi})^{-T})\|_\infty\\
    &= \max_{s,a}\sum_{s',a'}\crbk{(I-\gamma P^{\pi})\inv(s,a,s',a')\sigma(q^*)(s',a')}^2\\
    &\leq  \max_{s,a}\crbk{\sum_{s',a'}(I-\gamma P^{\pi})\inv(s,a,s',a')\sigma(q^*)(s',a')}^2\\
    &=  \|(I-\gamma P^{\pi})\inv\sigma^{\pi}\|_\infty^2\\
    &\leq \frac{ 6400\tminor(P_\pi)}{(1-\gamma)^2}
\end{align*}
where we also used the Cauchy-Schwarz inequality and Corollary \ref{cor:asymp_var_bound}. Taking maximum and recalling the definition of $\tminor^*$ completes the proof. 
\end{proof}
\subsubsection{Proof of Theorem \ref{thm:sample_complexity_unq_lip}}
\begin{proof}
From Lemma \ref{lemma:asym_var_bound}, we have that if 
\[
n^*=\crbk{c + 25600(c')^2}\max\set{1,\frac{(1-\gamma)\tminor^*}{\epsilon^2}}\frac{1}{(1-\gamma)^3} \log_4\crbk{\frac{8|S||A|l^*}{\delta}},
\]
then the r.h.s. of \eqref{eqn:unq_lip_q_error_high_prob} is less than $\epsilon$. Here $c,c'$ are constants in Theorem \ref{thm:unq_lip_sample_bound}. So, for $\omega:\|\hat q_0(\omega)-q^*\|_\infty \leq 1/\sqrt{1-\gamma}$, we have that by Theorem \ref{thm:unq_lip_sample_bound}, $P(\|\hat q_{l^*}-q^*\|_\infty \leq \epsilon|\hat q_0) \geq 1-\delta$, provided that $n^*$ satisfies the requirement of  Theorem \ref{thm:unq_lip_sample_bound}. This happens when $\epsilon$ satisfies the requirement in Theorem \ref{thm:sample_complexity_unq_lip}.  
\par Indeed, under Assumption \ref{assump:opt_unique_m-doeblin},
if \eqref{eqn:eps_unq} hold, then
\[
n^* \geq c \max\set{1,\frac{1}{\Delta^2(1-\gamma)^{\beta}}}\frac{1}{(1-\gamma)^3}\log_4\crbk{\frac{8|S||A|l^*}{\delta}} 
\]
satisfying Theorem \ref{thm:sample_complexity_unq_lip}. Moreover, under Assumption \ref{assump:opt_unif_m-doeblin_lip} and  \eqref{eqn:eps_lip} hold, 
\[
n^* \geq c \max\set{1,\frac{1}{(1-\gamma)^\beta }\min\set{\frac{1}{\Delta^2},\frac{L^2}{(1-\gamma)^2}}}\frac{1}{(1-\gamma)^3}\log_4\crbk{\frac{8|S||A|l^*}{\delta}} 
\]
satisfying Theorem \ref{thm:sample_complexity_unq_lip} as well.
\par Now, consider the error probability of the combined procedure. We have
\begin{align*}
P(\|\hat q_{l^*}-q^*\|_\infty \leq \epsilon) 
&\geq E\1\set{\|\hat q_{0}-q^*\|_\infty \leq 1/\sqrt{1-\gamma},\|\hat q_{l^*}-q^*\|_\infty \leq \epsilon}\\
&\geq E\1\set{\|\hat q_{0}-q^*\|_\infty \leq 1/\sqrt{1-\gamma}}P\crbk{\|\hat q_{l^*}-q^*\|_\infty \leq \epsilon|\hat q_{0}}\\
&\geq (1-\delta^2)\\
&\geq 1-2\delta. 
\end{align*}
By replacing $\delta$ with $\delta/2$ in Theorem \ref{thm:unq_lip_sample_bound}, we conclude that it suffices to choose
\[
n^* = \tilde O\crbk{\frac{\tminor^*}{\epsilon^2(1-\gamma)^2 }\max\set{\frac{\epsilon^2}{(1-\gamma)\tminor^*},1}}.
\]
The total number of samples used by the combined algorithm, therefore, is 
\begin{align*}
|S||A|(k_0 + n^*) 
&= |S||A|\crbk{\tilde O\crbk{\frac{(\tminor^*)^2}{(1-\gamma)^2}}+\tilde O\crbk{\frac{\tminor^*}{\epsilon^2(1-\gamma)^2 }\max\set{\frac{\epsilon^2}{(1-\gamma)\tminor^*},1}}}\\
&= \tilde O\crbk{\frac{\tminor^*}{\epsilon^2 (1-\gamma)^2}\max\set{1,\epsilon^2\tminor^*,\frac{\epsilon^2}{(1-\gamma)\tminor^*}}}. 
\end{align*}

This implies the claim of Theorem \ref{thm:sample_complexity_unq_lip}. 
\end{proof}

\subsection{Proofs for Section \ref{section:unif_mix_cplx_bd}: Uniform Mixing}\label{section:vrql_unif_mix}

Recall the definition of $\bar r_l$ and $\bar q_l$ in equation \eqref{eqn:def_bar_r_l} and \eqref{eqn:def_bar_q_l} respectively. We denote an optimal policy for $\bar q_l$ by $\bar \pi_l$. Before we prove Proposition \ref{prop:opt_pi_vr_algo_high_prob_bound_unif_mix}, we introduce the following lemmas. 
\begin{lemma}[Lemma 5 in \cite{wainwright2019}]\label{lemma:Lemma_5_Wainwright}
For $\omega:\|\hat q_{l-1}-q^*\|\leq  b$,  w.p. at least $1-\delta$ under $P_{l-1}(\cd)(\omega)$
\[
|\bar r_l - r|\leq c_a(b+\sigma(q^*))\sqrt{\frac{\log(4|S||A|/\delta)}{n_l}}+c_b\|q^*\|_\infty\frac{\log(4|S||A|/\delta)}{n_l}
\]
for some absolute constants $c_a,c_b$. 
\end{lemma}
\begin{lemma}[Lemma 3 in \cite{wainwright2019}]\label{lemma:Lemma_3_Wainwright} Let $k^*$ satisfies \eqref{eqn:k*_nl_intermediate_prop_unif_mix} for some sufficently large absolute constant $c$. Then under $P_{l-1}(\cd)$,
\[
\|\hat{q}_l - \bar q_l\|_\infty\leq\frac{1}{4}\|\hat q_{l-1} - \bar q_l\|
\] 
w.p. at least $1-\delta/(2(l^*+1))$. 
\end{lemma}
\begin{lemma}\label{lemma:perturbed_fp_bound}
On $\set{\omega:\|\hat q_{l-1}-q^*\|\leq  \sqrt{\tminor}}$, for sufficiently large absolute constants $c,c'$, 
\[
\|q^*-\bar q_l\|_\infty\leq c \crbk{\crbk{ \frac{ \sqrt{\tminor}}{1-\gamma}+ \frac{\gamma^2\|\bar q_{l} - q^*\|_\infty }{1-\gamma} }\sqrt{\frac{\log(8|S||A|(l^*+1)/\delta)}{n_l}}  + \frac{\log(8|S||A|(l^*+1)/\delta)}{(1-\gamma)^2n_l} }.
\]
w.p. at least $1-\delta/(2(l^*+1))$ under probability measure $P_{l-1}$, provided that $n_l\geq (c')^2(1-\gamma)^{-2}\log(8(l^*+1)|S||A|/\delta)$. 
\end{lemma}
\subsubsection{Proof of Lemma \ref{lemma:perturbed_fp_bound}}
\begin{proof}
By Lemma \ref{lemma:Lemma_5_Wainwright}, we have that on $\set{\omega:\|\hat q_{l-1}-q^*\|\leq \sqrt{\tminor}}$ under $P_{l-1}$ w.p. at least $1-\delta/(2l^*)$
\begin{equation}\label{eqn:perturbed_rwd_bound}
|\bar r_l - r|\leq c_a(\sqrt{\tminor}+\sigma(q^*))\sqrt{\frac{\log(8l^*|S||A|/\delta)}{n_l}}+c_b\|q^*\|_\infty\frac{\log(8l^*|S||A|/\delta)}{n_l}.
\end{equation}
Also, By Lemma 6 in \cite{wainwright2019},
\begin{equation}\label{eqn:recenter_fp_dist_bound}
|q^*-\bar q_l|\leq \max\set{(I-\gamma P^{\pi^*})\inv|\bar r_l - r|, (I-\gamma P^{\bar \pi_l})\inv|\bar r_l - r|}.
\end{equation}
 Here, $\abs{\cd}$ denotes the entrywise absolue value, and the inequalities holds entrywise. Let $B_l$ be the set s.t. \eqref{eqn:perturbed_rwd_bound} holds. Then on $B_l$, the vector $(I-\gamma P^{\pi^*})|\bar r_l - r|$ satisfies
\begin{align*}
&\quad (I-\gamma P^{\pi^*})\inv|\bar r_l - r|\\
&\leq c_a\crbk{\frac{\sqrt{\tminor}}{1-\gamma} + \|(I-\gamma P^{\pi^*})\inv\sigma(q^*)\|_\infty}\sqrt{\frac{\log(8l^*|S||A|/\delta)}{n_l}} + c_b\frac{\|q^*\|_\infty}{1-\gamma}\frac{\log(8l^*|S||A|/\delta)}{n_l}\\
&\leq c_a\frac{81\sqrt{\tminor}}{1-\gamma}\sqrt{\frac{\log(8l^*|S||A|/\delta)}{n_l}} + c_b\frac{\log(8l^*|S||A|/\delta)}{(1-\gamma)^2n_l}
\end{align*}
Moreover, let $\bar v_l(\cd) =  q_l(\cd,\bar\pi_l(\cd))$
\begin{align*}
\sigma(q^*)(s,a)^2 
&=\gamma^2 P_{s,a}[(v(q^*) - P_{s,a} [v(q^*)])^2]\\
&=\gamma^2 P_{s,a}[(v^*  - P_{s,a} [v^*])^2]\\
&=\gamma^2 P_{s,a}[(v^* -   \bar v_l + \bar v_l -P_{s,a} [\bar v_l]+ P_{s,a} [\bar v_l]- P_{s,a} [v^*])^2]\\
&= \gamma^2\var_{s,a}(v^*(S')-\bar v_l(S')) + \gamma^2\var_{s,a}(\bar v_l(S'))\\
&\leq\gamma^2\var_{s,a}(\bar v_l(S')) + \gamma^2\|\bar q_{l} - q^*\|_\infty
\end{align*}
Let $\sigma_l(s,a)^2 =\gamma^2 \var_{s,a}(\bar v_l(S'))$. By construction and the optimality of $\bar\pi_l$, $\sigma_l^2$ is the variance of the 1-sample Bellman operator associated with the reward $\bar r_l$ (c.f. definition \eqref{eqn:1-samp_var} and \eqref{eqn:pi_var}). Therefore, we have that by Corollary \ref{cor:asymp_var_bound},
\begin{equation}\label{eqn:unif_mix_mod_(I-gP)sig_bd}
\begin{aligned}
 \|(I-\gamma P^{\bar\pi_l})\inv\sigma(q^*)\|_\infty &\leq \|(I-\gamma P^{\bar\pi_l})\inv \sigma_l\|_\infty +  \frac{\gamma^2\|\bar q_{l} - q^*\|_\infty}{1-\gamma}\\
 &\leq \|\bar r_l\|_\infty\frac{80\sqrt{\tminor}}{1-\gamma} +  \frac{\gamma^2\|\bar q_{l} - q^*\|_\infty}{1-\gamma}
\end{aligned}
\end{equation}
where we note that $\tminor(P_{\bar\pi_l})\leq \tminor$ by the definition of $\tminor$ in Assumption \ref{assump:unif_m-doeblin}. For $\omega\in B_l$, by the assumption on $n_l$ and $c' \geq 10 (c_a\vee c_b)$
\begin{align*}
\|\bar r_l\|_\infty &\leq 1+\|\bar r_l - r\|_\infty\\
&\leq 1 + c_a(\sqrt{\tminor}+ \norm{\sigma(q^*)}_{\infty})\sqrt{\frac{\log(8l^*|S||A|/\delta)}{n_l}}+c_b\|q^*\|_\infty\frac{\log(8l^*|S||A|/\delta)}{n_l}\\
&\leq 1 + 5c_a\tminor\sqrt{\frac{\log(8l^*|S||A|/\delta)}{n_l}}+c_b\frac{\log(8l^*|S||A|/\delta)}{(1-\gamma)n_l}\\
&\leq 3. 
\end{align*}
So, on $B_l$, the error
\begin{align*}
&\quad (I-\gamma P^{\bar \pi_l})\inv|\bar r_l - r|\\
&\stackrel{(i)}{\leq} c_a\crbk{\frac{\sqrt{\tminor}}{1-\gamma} + \|(I-\gamma P^{\bar\pi_l})\inv\sigma(q^*)\|_\infty}\sqrt{\frac{\log(8l^*|S||A|/\delta)}{n_l}} + c_b\frac{\|q^*\|_\infty}{1-\gamma}\frac{\log(8l^*|S||A|/\delta)}{n_l}\\
&\stackrel{(ii)}{\leq} c_a\crbk{ \frac{241 \sqrt{\tminor}}{1-\gamma}+ \frac{\gamma^2\|\bar q_{l} - q^*\|_\infty }{1-\gamma} }\sqrt{\frac{\log(8l^*|S||A|/\delta)}{n_l}}  + c_b\frac{\log(8l^*|S||A|/\delta)}{(1-\gamma)^2n_l}. 
\end{align*}
where $(i)$ follows from \eqref{eqn:perturbed_rwd_bound}, $(ii)$ replaces $\|(I-\gamma P^{\bar\pi_l})\inv\sigma(q^*)\|_\infty $ with the upper bound \eqref{eqn:unif_mix_mod_(I-gP)sig_bd}. 
Therefore, combining these with \eqref{eqn:recenter_fp_dist_bound}, we have that w.p. at least $1-\delta/(2l^*)$
\[
|q^*-\bar q_l|\leq c \crbk{\crbk{ \frac{ \sqrt{\tminor}}{1-\gamma}+ \frac{\gamma^2\|\bar q_{l} - q^*\|_\infty }{1-\gamma} }\sqrt{\frac{\log(8l^*|S||A|/\delta)}{n_l}}  + \frac{\log(8l^*|S||A|/\delta)}{(1-\gamma)^2n_l} }
\]
for some large absolute constant $c$. 
This implies the claimed result.
\end{proof}
Equipped with Lemma \ref{lemma:Lemma_3_Wainwright} and \ref{lemma:perturbed_fp_bound}, we are ready to prove Proposition \ref{prop:opt_pi_vr_algo_high_prob_bound_unif_mix}.

\subsubsection{Proof of Proposition \ref{prop:opt_pi_vr_algo_high_prob_bound_unif_mix}}
\begin{proof}
By the choice of $n_l$ with sufficiently large $c'$
\[
c\frac{\gamma^2\|\bar q_{l} - q^*\|_\infty }{1-\gamma} \sqrt{\frac{\log(8|S||A|(l^*+1)/\delta)}{n_l}}\leq  \frac{1}{2} \|\bar q_{l} - q^*\|_\infty.
\]
Moreover, $n_l$ satisfies the assumption of  Lemma \ref{lemma:perturbed_fp_bound} for every $l$. Therefore, Lemma \ref{lemma:perturbed_fp_bound} implies that on $\set{\omega:\|\hat q_{l-1}-q^*\|\leq 2^{-l+1}\sqrt{\tminor}}\subset \set{\omega:\|\hat q_{l-1}-q^*\|\leq \sqrt{\tminor}}$ w.p. at least $1-\delta/(2(l^*+1))$ under $P_{l-1}(\cd)$
\[
\|q^*-\bar q_l\|_\infty\leq 2c \crbk{ \frac{ \sqrt{\tminor}}{1-\gamma}\sqrt{\frac{\log(8|S||A|(l^*+1)/\delta)}{n_l}}  + \frac{\log(8|S||A|(l^*+1)/\delta)}{(1-\gamma)^2n_l} }.
\]
Observe that the choice of $k^*$ in \eqref{eqn:par_choic_unif_mix} satisfies \eqref{eqn:k*_nl_intermediate_prop_unif_mix}. Hence, by Lemma \ref{lemma:Lemma_3_Wainwright}, \ref{lemma:perturbed_fp_bound}, and the union bound, on $\set{\omega:\|\hat q_{l-1}-q^*\|\leq 2^{-l+1}\sqrt{\tminor}}$ w.p. at least $1-\delta/(l^*+1)$ under $P_{l-1}(\cd)$,
\begin{align*}
\|\hat q_l-q^*\|_\infty&\leq \|\hat q_l-\bar q_l\|_\infty + \| q^*-\bar q_l\|_\infty\\
&\leq \frac{1}{4}\|\hat q_{l-1}-\bar q_l\|_\infty + \| q^*-\bar q_l\|_\infty\\
&\leq \frac{1}{4}\|\hat q_{l-1}- q^* \|_\infty +\frac{5}{4}\|q^* -\bar q_l\|_\infty \\
&\leq \frac{2^{-l}\sqrt{\tminor}}{2}  + \frac{5}{2}c \crbk{ \frac{ \sqrt{\tminor}}{1-\gamma}\sqrt{\frac{\log(8|S||A|(l^*+1)/\delta)}{n_l}}  + \frac{\log(8|S||A|(l^*+1)/\delta)}{(1-\gamma)^2n_l} }\\
&\leq 2^{-l}\sqrt{\tminor}
\end{align*}
where the last inequality follows from the choice of $n_l$. Therefore, repeating the proof of Proposition \ref{prop:opt_pi_vr_algo_high_prob_bound} with $b = \sqrt{\tminor}$ allow us to conclude that $\|\hat q_{l^*} - q^*\|_\infty\leq \epsilon$ w.p. at least $1-\delta$. Note that the assumption $\epsilon\leq\sqrt{\tminor}$ implies $b/\epsilon\geq 1$.
\par As in the previous remark following Proposition \ref{prop:opt_pi_vr_algo_high_prob_bound}, the event $\set{\forall 1\leq l\leq l^*, \|\hat q_{l} - q^*\|_\infty\leq 2^{-l}\sqrt{\tminor}}$ holds w.p. at least $1-\delta$. 
\end{proof}
\subsubsection{Proof of Proposition \ref{prop:err_high_prob_bd_unif_mixing}}
\begin{proof}
    The proof is the same as that of Proposition \ref{prop:general_error_high_prob_bd} given \ref{prop:opt_pi_vr_algo_high_prob_bound}. 
\end{proof}

\subsubsection{Proof of Theorem \ref{thm:sample_complexity_unif_general_start}}
\begin{proof}
Notice that Assumption \ref{assump:opt_m-doeblin} is implied by Assumption \ref{assump:unif_m-doeblin} with $\tminor^*\leq \tminor$. As pointed out before, the sample size to compute $\hat q_0$, by Theorem \ref{thm:sample_complexity_general}, is $\tilde O(|S||A|(1-\gamma)^{-3})$. The overall sample complexity of this procedure is
\begin{align*}
& \tilde O\crbk{ \frac{|S||A| }{(1-\gamma)^3}} + c'|S||A|\crbk{\frac{ \tminor}{\epsilon^2(1-\gamma)^2}+ \frac{\ceil{\log_2(\sqrt{\tminor}/\epsilon)}}{(1-\gamma)^3}}\log\crbk{\frac{8d\tminor}{(1-\gamma)\delta}}\\
&=\tilde O\crbk{ \frac{|S||A| }{(1-\gamma)^3}} +  \tilde O\crbk{ \frac{|S||A| }{(1-\gamma)^2}\max\set{\frac{1}{1-\gamma},\frac{\tminor}{\epsilon^2}}}\\
&=\tilde O\crbk{ \frac{|S||A| }{(1-\gamma)^2}\max\set{\frac{1}{1-\gamma},\frac{\tminor}{\epsilon^2}}}. 
\end{align*}   
\end{proof}

\section{Proofs of the Minimax Lower Bounds}
\subsection{Proof of Theorem \ref{thm:local_lower_bd_unif_mix}}\label{section:lb_instance_risk}
\begin{proof}
By Theorem \ref{thm:khamaru_lower_bd}, it suffices to show that for any $\gamma\in[0.9,1)$ and $10\leq \tminor\leq 1/(1-\gamma)$  MDP instance $\cM_{\gamma,\tminor}$ satisfies
\begin{equation}\label{eqn:loc_minimax_risk_lower_bd}
\max_{\pi\in\Pi^*}\|\nu^{\pi}(\cM_{\gamma,\tminor})^2\|_\infty \geq  \frac{\tminor}{3^5(1-\gamma)^2}.
\end{equation}
\par We compute the minimax risk parameter for $\cM_{\gamma,\tminor}$
\begin{align*}
\nu^{\pi}(\cM_{\gamma,\tminor})(1,a_1)^2 
&=\gamma^2 \var\crbk{\set{(I-\gamma P^{\pi^*})\inv \widehat\cT(q^*) }(1,a_1)}\\
&= \gamma^2 \frac{(1-p) p (1-2 \gamma  (1-p)+\gamma ^2 (1-2 p + 2 p^2))}{(1-\gamma )^2 (1-\gamma +2 \gamma  p)^4}.
\end{align*}
We show that $\nu^{\pi}(\cM_{\gamma,\tminor})(1,a_1)^2 = \Omega(p\inv (1-\gamma)^{-2})$ for $\tminor = 1/p\leq 1/(1-\gamma)$, $p\in(0,0.1]$, and $\gamma \in [0.9,1)$. Define the function
\[
f(p,\gamma) := \frac{(1-\gamma)^2 p }{\gamma^2}\nu^{\pi^*}(q^*)(1,a_1)^2. 
\]
It suffices to prove that $f(p,\gamma)$ is uniformly bounded away from 0 in the region of interests. 
\par We compute the derivative w.r.t. $\gamma$,
\[
\del_\gamma f(p,\gamma) = \frac{2 (1-p) p^2 \left((\gamma -1)^2-4 \gamma ^2 p^3-2 \gamma  (2-3 \gamma ) p^2 - \left(4 \gamma ^2-7 \gamma +3\right) p\right)}{(\gamma  (2 p-1)+1)^5}.
\]
Note that $(\gamma  (2 p-1)+1)^5 > 0$. Moreover, let
\[
h = (\gamma -1)^2-4 \gamma ^2 p^3-2 \gamma  (2-3 \gamma ) p^2 - \left(4 \gamma ^2-7 \gamma +3\right)p 
\] 
For $p\in(0,0.1]$ and $\gamma \in [0.9,1)$, we can bound
\begin{align*}
    h &= (\gamma -1)^2-4 \gamma ^2 p^3 + 2 \gamma  (3 \gamma -2 ) p^2 + (1-\gamma)(4\gamma -3)p \\  
    &\geq  (\gamma -1)^2-4 \gamma ^2 p^3+ \gamma^2p^2 +  \gamma  (5 \gamma - 4 ) p^2 \\
    &>(\gamma -1)^2\\
    &\geq 0.
\end{align*}
So, $\del_\gamma f(p,\gamma) > 0$ for $p\in(0,0.1]$ and $\gamma \in [0.9,1)$, which implies that $f(p,\cd)$ is non-decreasing. Recall that $1/p\leq 1/(1-\gamma)$; i.e. $\gamma\geq 1-p$. So, for $p\leq 0.1$, $\gamma \geq 0.9$, and $\gamma\geq 1-p$
\[
f(p,\gamma)\geq f(p,1-p) = \frac{(1-p) \left(2 p^2-6 p+5\right)}{(3-2 p)^4}\geq \frac{2}{3^4}. 
\]
So, the constant $2\cd 3^{-4}\gamma^2\geq 3^{-5}$. 
\end{proof}

\subsection{Proof of Theorem \ref{thm:global_lower_bd_unif_mix}}\label{section:lb_unif_risk}
\begin{proof}
    By \eqref{eqn:lower_bouding_global_risk_by_two_instance}, we have that for any $\bar \cM \in \cC(\gamma,\tminor)$
    \[
    \mathfrak{M}_n(\gamma,\tminor)\geq \inf_{\tilde q_n}\max_{\cM \in\set{\cM_{\gamma,\tminor},\bar \cM}}\sqrt{n}E_n^\cM\| \tilde q_n - q^*(\cM)\|_\infty.
    \]
    The rest of the proof directly follows from that of Theorem 1 in \cite{khamaru2021} (Theorem \ref{thm:khamaru_lower_bd}). Here we sketch the key steps and the construction of the instance $\bar \cM \in \cC(\gamma,\tminor)$. 
    \par First, the proof in \cite{khamaru2021} use the standard reduction to testing and the Le Cam's lower bound to conclude that for all $\cM_1,\cM_2$
    \[
    \inf_{\hat q_n}\max_{\cM \in\set{\cM_1,\cM_2}}\sqrt{n}E_n^\cM\| \hat q_n - q^*(\cM)\|_\infty\geq \frac{\sqrt{n}}{4}\|q^*(\cM_1) - q^*(\cM_2)\|_\infty\crbk{1-n\sqrt{2} \dhel(P_1^{\cM_1},P_1^{\cM_2})^2}. 
    \]
    where $\dhel$ is the Hellinger distance. Therefore, if $\cM_1 = \cM_{\gamma,\tminor}$ and $\cM_2 = \bar \cM$ s.t. $ \dhel(P_1^{\cM_1},P_1^{\cM_2})\leq (2\sqrt{n})\inv $. 
    \[
    \mathfrak{M}_n(\gamma,\tminor)\geq \frac{\sqrt{n}}{8}\|q^*(\cM_{\gamma,\tminor}) - q^*(\bar\cM)\|_\infty
    \]
    \par Next, we use the $\bar \cM$ constructed in Lemma 2 of \cite{khamaru2021}. Note that $\cM_{\gamma,\tminor}$, any stationary deterministic Markov policy is optimal, induce the same $P_\pi$ and $\|\nu^\pi(\cM_{\gamma,\tminor})\|_\infty$. Let 
        $U := (I-\gamma P^{\pi})\inv$, $q^* = Ur$, and $(\bar s,\bar a)$ s.t. $ \nu^\pi(\cM_{\gamma,\tminor})(\bar s,\bar a) = \|\nu^\pi(\cM_{\gamma,\tminor})\|_\infty$. Define 
        $\bar \cM = (S,A,r,\bar P,\gamma)$ having the same reward $r$  as $\cM_{\gamma,\tminor}$ and 
    \[
    \bar P_{s,a}(s') = P_{s,a}(s') + \frac{P_{s,a}(s')U_{\bar s,\bar a}(s,a)[v(q^*)(s') - (P^{\pi}q^*)(s,a)]}{\sqrt{2n}\|\nu^\pi(\cM_{\gamma,\tminor})\|_\infty}.
    \]
    Lemma 2 and 3 of \cite{khamaru2021} and the proof of Theorem \ref{thm:local_lower_bd_unif_mix} imply that $\bar P$ is a valid transition kernel s.t.
    $\dhel(P_1^{\cM_{\gamma,\tminor}},P_1^{\bar\cM})\leq (2\sqrt{n})\inv $ and
    \[
    \|q^*(\cM_{\gamma,\tminor}) - q^*(\bar\cM)\|_\infty\geq c\gamma\max_{\pi\in\Pi^*}\|\nu^\pi(\cM_{\gamma,\tminor})\|_\infty\geq \frac{c\sqrt{\tminor}}{3^5(1-\gamma)}
    \]
    when $n$ satisfies \eqref{eqn:lb_min_sample_size}, which is implied by the assumption. Therefore, it is left to show that $\bar \cM\in \cC(\gamma,\tminor)$.  To do this, it is sufficient to show that $\bar P_\pi$ is $(1,p)$-Doeblin for all $\pi\in\Pi$. 
    \par Recall the definition of $\cM_{\gamma,\tminor}$, for every policy $\pi\in\Pi$,
    \begin{align*}
    \bar P_{\pi}(s,s') &=  2p \frac{e^T}{2}(s') + (1-2p)I(s,s') + \frac{P_{s,\pi(s)}(s')U_{\bar s,\bar a}(s,\pi(s))[v(q^*)(s') - (P_{\pi}v(q^*))(s)]}{\sqrt{2n}\|\nu^\pi(\cM_{\gamma,\tminor})\|_\infty}.
    \end{align*}
    Moreover,
    \[
    \sup_{s,s'}|P_{s,\pi(s)}(s')U_{\bar s,\bar a}(s,\pi(s))[v(q^*)(s') - (P_{\pi}v(q^*))(s)]|\leq \frac{\spnorm{q^*}}{1-\gamma}\leq \frac{\tminor}{1-\gamma}.
    \]
    By \eqref{eqn:loc_minimax_risk_lower_bd}, and the choice of $n$,
    \[
    \sup_{s,s'}\abs{\frac{P_{s,\pi(s)}(s')U_{\bar s,\bar a}(s,\pi(s))[v(q^*)(s') - (P_{\pi}v(q^*))(s)]}{\sqrt{2n}\|\nu^\pi(\cM_{\gamma,\tminor})\|_\infty}}\leq \frac{3^{5/2}\sqrt{\tminor}}{\sqrt{2n}}\leq \frac{p}{2},
    \]
    where $p = 1/\tminor$. This completes the proof. 
\end{proof}

\section{Proofs for Section \ref{section:properties_tot_rewards}}\label{section:proofs_tot_rwds}
\subsection{Proof of Proposition \ref{prop:q_osc}}
\begin{proof}
The $q^\pi$ is the state-action value function under the policy $\pi$: 
\begin{align*}
q^\pi(s,a) &= E_{s,a}^{\pi}\sum_{k=0}^\infty\gamma^k r(S_k,A_k)\\
&= r(s,a) + E_{s,a}^{\pi}\sum_{k=1}^\infty \gamma^k r(S_k,A_k)\\
&= r(s,a) + \gamma E_{s,a}E_{S_1}^{\pi}\sum_{k=0}^\infty \gamma^k r_{\pi}(S_k)
\end{align*}
where we use the Markov property on the bounded functional $\sum_k\gamma^kr$. Now
\begin{align*}
E_{S_1}^{\pi}\sum_{k=0}^\infty \gamma^k r(S_k,A_k) &= E_{S_1}^{\pi}\sum_{j=0}^\infty \sum_{k=0}^{m-1} \gamma^{mj+k} r_{\pi}(S_{mj+k})\\
&= E_{S_1}^{\pi}\sum_{j=0}^\infty \sum_{k=0}^{m-1} \gamma^{mj+k} r_{\pi}(S_{mj+k})\\
&=   \sum_{k=0}^{m-1}\gamma^k\sum_{j=0}^\infty \gamma^{mj}  E_{S_1}^{\pi}E[r_{\pi}(S_{mj+k})|\cF_{mj}]\\
&=   \sum_{k=0}^{m-1}\gamma^kE_{S_1}^{\pi}\sum_{j=0}^\infty \gamma^{mj}  E_{S_{mj}}r_{\pi}(S_{k})
\end{align*}
The last conditional expectation is a bounded function $w_k(s) :=E_{s}r_{\pi}(S_{k})\in [0,1]$. Recall that $\tau_1$ is the first regeneration time of the split chain. Then
\begin{align*}
    E_{S_1}^{\pi}\sum_{k=0}^\infty \gamma^k r_\pi(S_k) 
    &= \sum_{k=0}^{m-1}\gamma^kE_{S_1}^{\pi}\sum_{j=0}^\infty \gamma^{mj}  w_{k}(S_{mj})\\
    &= \sum_{k=0}^{m-1}\gamma^kE_{S_1}^{\pi}\crbk{\sum_{j=0}^{\tau_1-1} \gamma^{mj}  w_{k}(S_{mj}) + \sum_{j=\tau_1}^{\infty} \gamma^{mj}  w_{k}(S_{mj})}\\
    &= \sum_{k=0}^{m-1}\gamma^kE_{S_1}^{\pi}\crbk{\sum_{j=0}^{\tau_1-1} \gamma^{mj}  w_{k}(S_{mj}) + E\sqbkcond{\sum_{j=\tau_1}^{\infty} \gamma^{mj}  w_{k}(S_{mj})}{\cF_\tau}}\\
    &= \sum_{k=0}^{m-1}\gamma^kE_{S_1}^{\pi}\crbk{\sum_{j=0}^{\infty} \gamma^{mj}  w_{k}(S_{mj})\1\set{\tau_1>j} + E_\psi\sum_{j=0}^{\infty} \gamma^{mj}  w_{k}(S_{mj})}
\end{align*}
where we can use the strong Markov property by the boundedness of $w_k$; and the split chain's distribution at regeneration time is $\psi$. Note that the second term is constant. So, if we let
\[
c := \sum_{k=0}^{m-1}\gamma^k E_\psi\sum_{j=0}^{\infty} \gamma^{mj}  w_{k}(S_{mj}),
\]
then $q^\pi$ can be written as
\begin{align*}
    q^\pi(s,a) &=r(s,a) + \gamma c+ \gamma E_{s,a}\sum_{k=0}^{m-1}\gamma^kE_{S_1}^{\pi}\sum_{j=0}^{\infty} \gamma^{mj}  w_{k}(S_{mj})\1\set{\tau>j} .
\end{align*}
Since $w_k(\cd)\in[0,1]$, 
\begin{align*}
    0\leq q^\pi(s,a)-\gamma c&\leq 1 + \gamma E_{s,a}\sum_{k=0}^{m-1}\gamma^kE_{S_1}^{\pi}\sum_{j=0}^{\infty} \gamma^{mj} \1\set{\tau>j}\\
    &=1 + \gamma E_{s,a}\sum_{k=0}^{m-1}\gamma^k\sum_{j=0}^{\infty} \gamma^{mj} P_{S_1}^{\pi}(\tau>j)\\
    &=1 + \gamma E_{s,a}\sum_{k=0}^{m-1}\gamma^k\sum_{j=0}^{\infty} \gamma^{mj} (1-p)^j\\
    &= 1 + \frac{\gamma(1-\gamma^m)}{(1-\gamma)(1-(1-p)\gamma^m)}.
\end{align*}
Notice that 
\begin{equation}\label{eqn:gamma_m_p_bound}
\frac{\gamma(1-\gamma^m)}{(1-\gamma)} = \gamma\sum_{k=0}^{m-1}\gamma^k\leq m\gamma; \qquad \frac{1}{(1-(1-p)\gamma^m)}\leq \frac{1}{p}.
\end{equation}
Therefore, we conclude that $\|q^\pi-\gamma c\|_\infty \leq m/p + 1$. Also recall the definition of $\spnorm{\cd}$
\begin{align*}
    \spnorm{q^\pi} 
    &= 2\|q^\pi-\gamma c\|_\infty \\
    &\leq 2m/p + 2\\
    &\leq 3m/p
\end{align*}
where we used that $p\leq 1/2$ and $m\geq 1$. 
\end{proof}

\subsection{Proof of Proposition \ref{prop:var_cumulative_rwd}}
\begin{proof}
Define $T_{j+1} = \tau_{j+1}-\tau_j$. By uniform regeneration, $\set{T_{j+1}} \eqd \set{mG_i}$ where $G_i\sim $Geo$(p)$ i.i.d.. So, 
\[
E\gamma^{cT_i} = E\exp(cm\log(\gamma)G) = \frac{p\gamma^{cm}}{1-(1-p)\gamma^{cm}}=:\chi(c)
\]
Next, we expand the variance: 
\begin{align*}
\Psi^\pi(s,a)&=\var_{s,a}^\pi\crbk{\sum_{k = 0}^\infty\gamma^kr_\pi(S_k)} \\
&=\var_{s,a}^\pi\crbk{\sum_{j=0}^\infty\sum_{k = {\tau_j}}^{\tau_{j+1}-1}\gamma^kr_\pi(S_k)} \\
&\leq 2\var_{s,a}^\pi\crbk{\sum_{k = 0}^{\tau_1-1}\gamma^kr_\pi(S_k)} + 2\var_{s,a}^\pi\crbk{\sum_{j=1}^\infty\sum_{k = {\tau_j}}^{\tau_{j+1}-1}\gamma^kr_\pi(S_k)}.
\end{align*}
We split the second term using the regeneration cycles. Let $W_{j+1} = (S_{\tau_j},\ds,S_{\tau_{j+1}-1},T_{j+1})$, $\cC_k = \sigma(W_{j},j\leq k)$ and
\[
g(W_{j+1}) = \sum_{k=0}^{T_{j+1}-1} \gamma^{k}r_\pi(S_{\tau_j+k}).
\]
Note that
\begin{equation}\label{eqn:Eg(W)_bd}
\begin{aligned}
Eg(W_{j+1}) &\leq E\sum_{k=0}^{T_{j+1}-1} \gamma^{k}\\
&= \frac{1-E\gamma^{T_{j+1}}}{1-\gamma}\\
&= \frac{1-\chi(1)}{1-\gamma}\\
&= \frac{1-\gamma^m}{(1-\gamma)(1-(1-p)\gamma^m)}. 
\end{aligned}
\end{equation}
Recall that the sequence of random elements $\set{W_{j+1}}$ are 1-dependent. In particular $W_{j+2}\perp W_{j}$. Then
\begin{align*}
\var_{s,a}^\pi\crbk{\sum_{j=1}^\infty\sum_{k = {\tau_j}}^{\tau_{j+1}-1}\gamma^kr_\pi(S_k)} 
&= \var_{s,a}^\pi\crbk{\sum_{j=1}^\infty\gamma^{\tau_j}g(W_{j+1})}\\
&= \sum_{j=1}^\infty\var_{s,a}^\pi\crbk{\gamma^{\tau_j}g(W_{j+1})} + 2\sum_{i=1}^\infty\sum_{j=1}^{i-2}\cov(\gamma^{\tau_i}g(W_{i+1}),\gamma^{\tau_j}g(W_{j+1}))\\
&\qquad +2\sum_{i=1}^\infty\cov(\gamma^{\tau_i}g(W_{i+1}),\gamma^{\tau_{i+1}}g(W_{i+2}))\\
&=:V_1+2V_2+2V_3
\end{align*}
The first term can be bounded by the second moment
\begin{align*}
V_1&=\sum_{j=1}^\infty\var_{s,a}^\pi\crbk{\gamma^{\tau_j}g(W_{j+1})}\\
&\leq \sum_{j=1}^\infty E\gamma^{2\tau_j}g(W_{j+1})^2\\
&\leq \frac{1}{(1-\gamma)^2}\sum_{j=1}^\infty E\gamma^{2\tau_j}(1-\gamma^{T_{j+1}})^2\\
&\leq \frac{1}{(1-\gamma)^2}\sum_{j=1}^\infty \chi(2)^j(1-2\chi(1)+\chi(2))\\
&\leq \frac{p \gamma ^{2 m} (1-\gamma ^m) (1+(1-p) \gamma^m)}{(1-\gamma)^2(1+\gamma ^m) (1-(1-p) \gamma ^m)(1-(1-p) \gamma ^{2 m})}\\
&\leq \frac{ 2m }{(1-\gamma)p}
\end{align*}
where the last line follows from \eqref{eqn:gamma_m_p_bound}. Similarly, the third term can be bounded by the variance. 
\begin{align*}
    V_3&=\sum_{i=1}^\infty\cov(\gamma^{\tau_i}g(W_{i+1}),\gamma^{\tau_{i+1}}g(W_{i+2}))\\
    &\leq \sum_{i=1}^\infty\sqrt{\var(\gamma^{\tau_i}g(W_{i+1}))\var(\gamma^{\tau_{i+1}}g(W_{i+2}))}\\
    &\leq \frac{1}{(1-\gamma)^2}\sum_{i=1}^\infty E\gamma^{2\tau_i}(1-\gamma^{T_{i+1}})^2\\
    &\leq \frac{ 2m }{(1-\gamma)p}.
\end{align*}
For the middle term, 
\begin{align*}
\cov(\gamma^{\tau_i}g(W_{i+1}),\gamma^{\tau_{j}}g(W_{j+1})) = E[\gamma^{2\tau_j} \gamma^{T_{j+1}}g(W_{j+1})\gamma^{\tau_i-\tau_{j+1}}g(W_{i+1})] - E[\gamma^{\tau_j} g(W_{j+1})]E[\gamma^{\tau_i}g(W_{i+1})].
\end{align*}
We will defer the proof of the following lemma:
\begin{lemma} \label{lemma:neg_cov}
\[
\cov(\gamma^{T_{j+1}},g(W_{j+1})) \leq 0. 
\]
\end{lemma}
By Lemma \ref{lemma:neg_cov}, $E\gamma^{T_{j+1}}g(W_{j+1})\leq E\gamma^{T_{j+1}}Eg(W_{j+1})$. Also, we use the regeneration property: for $j\leq i-2$ $W_{i+1}\perp \cC_{j+1}$. We have
\begin{align*}
    E[\gamma^{2\tau_j} \gamma^{T_{j+1}}g(W_{j+1})\gamma^{\tau_i-\tau_{j+1}}g(W_{i+1})] &= 
    E[\gamma^{2\tau_j} \gamma^{T_{j+1}}g(W_{j+1})E[\gamma^{\tau_i-\tau_{j+1}}g(W_{i+1})|\cC_{j+1}]]\\
    &=E[\gamma^{2\tau_j} \gamma^{T_{j+1}}g(W_{j+1})]E[\gamma^{\tau_i-\tau_{j+1}}g(W_{i+1})]\\
    &\leq E[\gamma^{2\tau_j}]E[\gamma^{T_{j+1}}]E[g(W_{j+1})] E[\gamma^{\tau_i-\tau_{j+1}}g(W_{i+1})]\\
    &=\chi(2)^j\chi(1)^{i-j}E[g(W_{j+1})]E[g(W_{i+1})]. 
\end{align*}
Then, the covariance term 
\begin{align*}
V_2 &=\sum_{i=1}^\infty\sum_{j=1}^{i-2}\cov(\gamma^{\tau_i}g(W_{i+1}),\gamma^{\tau_j}g(W_{j+1}))\\
&\leq\sum_{i=1}^\infty\sum_{j=1}^{i-2}(\chi(2)^j\chi(1)^{i-j} - \chi(1)^{j}\chi(1)^i)E[g(W_{j+1})]E[g(W_{i+1})]\\
&\leq\frac{(1-p) p^3 \gamma ^{4 m}}{(1-\gamma ^m) (1+\gamma ^m) (1-(1-p)\gamma ^m) (1-(1-2p) \gamma ^m)}\crbk{\frac{1-\gamma^m}{(1-\gamma)(1-(1-p)\gamma^m)}}^2\\
&\leq \frac{2(1-\gamma^m) p^3 }{(1-\gamma)^2  p (2p)p^2}\\
&\leq \frac{m }{(1-\gamma) p}
\end{align*}
where we used \eqref{eqn:Eg(W)_bd}. We conclude that
\begin{align*}
\Psi^\pi(s,a)&\leq 2\var_{s,a}^\pi\crbk{\sum_{k = 0}^{\tau_1-1}\gamma^kr_\pi(S_k)} + 2(V_1+2V_2+2V_3)\\
&\leq 2\frac{1+\chi(2)-2\chi(1)}{(1-\gamma)^2} + \frac{16m}{(1-\gamma)p}\\
&\leq 2\frac{(1-\gamma ^m)^2 (1+(1-p) \gamma ^m)}{(1-\gamma)^2(1-(1-p)\gamma^m)(1-(1-p)\gamma^{2m})} + \frac{16m}{(1-\gamma)p}\\
&\leq \frac{4m^2}{p^2} + \frac{16m}{(1-\gamma)p}\\
&\leq \frac{20m}{(1-\gamma)p}
\end{align*}
\end{proof}
\subsubsection{Proof of Lemma \ref{lemma:neg_cov}}
\begin{proof}
We simplify the notation by consider a new probability space supporting a cycle $\set{S_k,k=0,\ds,T}$ of the split chain starting from initial $S_0\sim \psi$ and cycle length $T$. We use the property of the split chain: Let $\set{Y_k}$ be a independent process on this probability space with $m$ skeleton having transition kernel $R$ and interpolated by $P_{\pi}$. Then, the random element $(S_0,\ds,S_{T-m},T)$ has the same distribution as $(Y_0,\ds,Y_{T-m},T)$. 
\par Write
\begin{align*}
    \cov(\gamma^{T},g(W))  &=\cov(\gamma^{T},\sum_{k=0}^{T-m-1}\gamma^kr_\pi(S_k)+\sum_{k=T-m}^{T-1}\gamma^kr_\pi(S_k))\\
    &=\cov(\gamma^{T},\sum_{k=0}^{T-m-1}\gamma^kr_\pi(Y_k)) + \cov(\gamma^{T},\sum_{k=T-m}^{T-1}\gamma^kr_\pi(S_k))
\end{align*}
First, we handle the last $m$-segment. 
\begin{align*}
E\gamma^T\sum_{k=T-m}^{T-1}\gamma^kr_\pi(S_k)
&=EE\sqbkcond{\gamma^T\sum_{k=T-m}^{T-1}\gamma^kr_\pi(S_k)}{T,S_{T-m}}\\
&= E\gamma^{2T-m}E\sqbkcond{\sum_{k=0}^{m-1}\gamma^kr_\pi(S_{T-m+k})}{T,S_{T-m}}\\
&= E\gamma^{2T-m}\tilde f(S_{T-m})\\
&= E\gamma^{2T-m}\tilde f(Y_{T-m})
\end{align*}
for some non-negative and deterministic function $\tilde f$. Here we used the property of the split chain: given the coin toss is successful and the current state $S_{T-m}$, the path $\set{S_{T-m+1},\ds,S_{T-1}}$ is generated condition on the independently sampled $S_{T}\sim\psi$ and $S_{T-m}$.  Similarly,
\[
E\gamma^T E\sum_{k=T-m}^{T-1}\gamma^kr_\pi(S_k) = E\gamma^{T} E\gamma^{T-m}\tilde f(Y_{T-m}).
\]
Therefore, 
\[
\cov(\gamma^{T},g(W))  = \cov\crbk{\gamma^{T},\sum_{k=0}^{T-m-1}\gamma^k r_\pi(Y_k) + \gamma^{T-m}\tilde f(Y_{T-m})}
\]
Define $h(T):= \gamma^T - E\gamma^T$ and the bounded functional $f:m\N\times \R^\N\ra\R^+$
\[
f(T,Y):= \sum_{k=0}^{T-m}\gamma^kr_\pi(Y_k)+\gamma^{T-m}\tilde f(Y_{T-m}) - E\crbk{\sum_{k=0}^{T-m}\gamma^kr_\pi(Y_k) +\gamma^{T-m}\tilde f(Y_{T-m}))}.
\]
Notice that, $f(\cd,Y)$ is non-decreasing and $h$ is decreasing. Let $T'\sim m$Geo$(p)$ be independent, then consider
\begin{align*}
0&\geq E(f(T,Y) - f(T',Y))(h(T) - h(T'))\\
&= Ef(T,Y) h(T) + Ef(T',Y) h(T') - Ef(T,Y)Eh(T')- Ef(T',Y)Eh(T)\\
&= 2\cov(\gamma^{T},g(W))
\end{align*}
where the last equality follows from, $(Y,T)\eqd (Y,T')$ and $Eh(T) = 0$. 
\end{proof}

\subsection{Proof of Corollary \ref{cor:asymp_var_bound}}
\begin{proof}
By Theorem \ref{thm:unif_ergodic_equiv_of_def} and the definition of $\tminor(P_{\pi})$, for any $\epsilon > 0$, there exists $(m,p)$ s.t. $P_{\pi}$ is $(m,p)$-Doeblin and $\tminor(P_\pi)+\epsilon\geq m/p$. From \cite{azar2013}, we have that \eqref{eqn:azar_var_bellman_eqn}
and inequality
\[
\|(I-\gamma P^{\pi})\inv\sigma^{\pi}\|_\infty\leq \frac{\sqrt{t}}{1-\gamma^t}\sqrt{\|(I-\gamma^2P^{\pi})^{-1}(\sigma^\pi)^2\|_\infty}.
\]
holds. Now, we choose $t =- \log 2/\log\gamma \leq 2/(1-\gamma)$ to conclude that
\begin{align*}
\|(I-\gamma P^{\pi})\inv\sigma^\pi\|_\infty
&\leq2\sqrt{t} \sqrt{\|\Psi^{\pi}\|_\infty}\\
&\leq\frac{80 \sqrt{m/p}}{(1-\gamma)}\\
&\leq \frac{80 \sqrt{\tminor(P_\pi)+\epsilon}}{(1-\gamma)}
\end{align*}
where we used Proposition \ref{prop:var_cumulative_rwd} to bound $\|\Psi^{\pi}\|_\infty$. Since $\epsilon > 0$ is arbitrary, this implies the corollary. 
\end{proof}
\end{document}